\newcommand{\xmark}{\ding{55}}
\newcommand{\cmark}{\ding{51}}
\newtheorem{theorem}{Theorem}[section]
\newtheorem{lemma}[theorem]{Lemma}
\newtheorem{note}[theorem]{Note}
\newtheorem{counter-example}[theorem]{Counter example}
\newtheorem{open question}[theorem]{Open question}
\newtheorem{corollary}[theorem]{Corollary}
\newtheorem{definition}[theorem]{Definition}
\newcommand{\reals}{{\mathbb R}}
\newcommand{\cd}{{\mathcal D}}
\newcommand{\cg}{{\mathcal G}}
\newcommand{\ch}{{\mathcal H}}
\newcommand{\cw}{{\mathcal W}}
\newcommand{\cl}{{\mathcal L}}
\newcommand{\cf}{{\mathcal F}}
\newcommand{\cx}{{\mathcal X}}
\newcommand{\cy}{{\mathcal Y}}
\newcommand{\trees}{\mathrm{trees}}
\newcommand{\lab}{\lambda}
\newcommand{\leaf}{\textrm{leaf}}
\newcommand{\sign}{\textrm{sign}}
\newcommand{\dotprod}[1]{\langle #1 \rangle}
\newcommand{\hamming}{\mathrm{\Delta_h}}
\newcommand{\inner}[1]{\langle #1 \rangle}
\DeclareMathOperator*{\Err}{Err}
\DeclareMathOperator*{\VC}{VC}
\DeclareMathOperator*{\argmin}{argmin}
\DeclareMathOperator*{\argmax}{argmax}
\DeclareMathOperator*{\prob}{\mathbb{P}}
\newcommand{\OvA}{\textrm{OvA}}
\newcommand{\AP}{\textrm{AP}}
\newcommand{\secref}[1]{Section~\ref{#1}}
\newcommand{\thmref}[1]{Theorem~\ref{#1}}
\newcommand{\lemref}[1]{Lemma~\ref{#1}}
\title{Multiclass Learning Approaches:\\ A Theoretical Comparison with Implications}
\author{Amit Daniely\thanks{Dept. of Mathematics, The Hebrew University, Jerusalem, Israel} \hspace{1cm} Sivan Sabato\thanks{School of Computer Science and Engineering, The Hebrew University, Jerusalem, Israel} \hspace{1cm} Shai Shalev-Shwartz\thanks{School of Computer Science and Engineering, The Hebrew University, Jerusalem, Israel}
}
\begin{document}

\maketitle

\begin{abstract}
  We theoretically analyze and compare the following five popular multiclass
  classification methods: One vs. All, All Pairs, Tree-based classifiers, Error
  Correcting Output Codes (ECOC) with randomly generated code matrices, and
  Multiclass SVM. In the first four methods, the classification is based on a
  reduction to binary classification. We consider the case where the binary
  classifier comes from a class of VC dimension $d$, and in particular from the
  class of halfspaces over $\reals^d$. We analyze both the estimation error and
  the approximation error of these methods. Our analysis reveals interesting
  conclusions of practical relevance, regarding the success of the different
  approaches under various conditions. Our proof technique employs tools from
  VC theory to analyze the \emph{approximation error} of hypothesis
  classes. This is in sharp contrast to most, if not all, previous uses of VC
  theory, which only deal with estimation error.
%
\end{abstract}

\section{Introduction}

In this work we consider multiclass prediction: The problem of classifying objects into one of several possible target classes. Applications include, for example, categorizing documents according
to topic, and determining which object appears in a given image.  We assume that objects (a.k.a. instances) are vectors in
$\cx = \reals^d$ and the class labels come from the set $\cy = [k] =
\{1,\ldots,k\}$. Following the standard PAC model, the learner
receives a training set of $m$ examples, drawn i.i.d. from some
unknown distribution, and should output a classifier which maps $\cx$ to $\cy$.

The centrality of the multiclass learning problem has spurred the
development of various approaches for tackling the task. Perhaps the
most straightforward approach is a reduction from multiclass classification to
binary classification. For example, the One-vs-All (OvA) method is based on a
reduction of the multiclass problem into $k$ binary problems, each of
which discriminates between one class to all the rest of the classes
(e.g. \cite{RumelhartHiWi86}). A different reduction is the All-Pairs
(AP) approach in which all pairs of classes are compared to each other
\citep{HastieTi98}. These two approaches have been unified under the
framework of Error Correction Output Codes (ECOC)
\citep{DietterichBa95,AllweinScSi00a}. A tree-based classifier (TC) is
another reduction in which the prediction is obtained by traversing
a binary tree, where at each node of the tree a binary classifier is
used to decide on the rest of the path (see for example \cite{BeygelzimerLaRa07}).

All of the above methods are based on reductions to binary
classification. We pay special attention to the case where the
underlying binary classifiers are linear separators
(halfspaces). Formally, each $w\in\mathbb \reals^{d+1}$ defines the
linear separator $h_{w}(x)=\sign(\inner{w,\bar{x}})$, where $\bar{x} =
(x, 1) \in \reals^{d+1}$ is the concatenation of the vector $x$ and the
scalar $1$. While halfspaces are our primary focus, many of our
results hold for any underlying binary hypothesis class of VC
dimension $d+1$.

Other, more direct approaches to multiclass classification over $\reals^d$
have also been proposed (e.g. \cite{Vapnik98,WestonWa99,CrammerSi01a}). In
this paper we analyze the Multiclass SVM (MSVM) formulation of
\cite{CrammerSi01a}, in which each hypothesis is of the form $h_W(x) =
\argmax_{i \in [k]} (W\bar{x})_i$, where $W$ is a $k \times (d+1)$
matrix and $(W\bar{x})_i$ is the $i$'th element of the vector
$W\bar{x} \in \reals^k$.

We theoretically analyze the prediction performance of the
aforementioned methods, namely, OvA, AP, ECOC, TC, and MSVM.
The error of a multiclass predictor $h : \reals^d \to [k]$ is defined
to be the probability that $h(x)\ne y$, where $(x,y)$ is sampled from the underlying
distribution $\cd$ over $\reals^d \times [k]$, namely, $\Err(h) = \prob_{(x,y) \sim \cd}[h(x) \neq y]$.
Our main goal is to understand which method is preferable in terms of the error it will achieve, based on
easy-to-verify properties of the problem at hand.

Our analysis pertains to the type of classifiers each method can
potentially find, and does not depend on the specific training
algorithm. More precisely, each method corresponds to a hypothesis
class, $\ch$, which contains the multiclass predictors that may be
returned by the method. For example, the hypothesis class of MSVM is
$\ch = \{ x \mapsto \argmax_{i \in [k]} (W\bar{x})_i : W \in \reals^{k
  \times (d+1)}\}$.

A learning algorithm, $A$, receives a training set, $S =
\{(x_i,y_i)\}_{i=1}^m$, sampled i.i.d. according to $\cd$, and returns
a multiclass predictor which we denote by $A(S) \in \ch$.  A learning
algorithm is called an Empirical Risk Minimizer (ERM) if it returns a
hypothesis in $\ch$ that minimizes the empirical error on the sample.
We denote by $h^\star$ a hypothesis in $\ch$ with
minimal error,\footnote{For simplicity, we assume that the minimum is
  attainable.} that is, $h^\star \in \argmin_{h \in \ch}
\Err(h)$.

When analyzing the error of $A(S)$, it is convenient to
decompose this error as a sum of \emph{approximation error}
and \emph{estimation error}:
\begin{equation} \label{eqn:2decomposition}
\Err(A(S)) ~=~ \underbrace{\Err(h^\star)}_{\text{approximation}} +
\underbrace{\Err(A(S))-\Err(h^\star)}_{\text{estimation}}.
\end{equation}
\begin{itemize}
\item
The {\bf approximation error} is the minimum error achievable by a
predictor in the hypothesis class, $\ch$.  The approximation error does not
depend on the sample size, and is determined solely by the allowed hypothesis class.
\item
The {\bf estimation error} of an algorithm is the difference between the approximation error,
and the error of the classifier the algorithm chose based on the sample.
This error exists both for statistical reasons, since the sample may not
be large enough to determine the best hypothesis, and for algorithmic reasons,
since the learning algorithm may not output the best possible hypothesis
given the sample. For the ERM algorithm, the estimation error can be bounded from above
by order of $\sqrt{C(\ch)/m}$ where $C(\ch)$ is a complexity measure of $\ch$ (analogous to the VC dimension) and
$m$ is the sample size.
A similar term also bounds the estimation error from below \emph{for any algorithm}.
Thus $C(\ch)$ is an estimate of the best achievable estimation error
for the class.
\end{itemize}

When studying the estimation error of different methods, we follow the
standard distribution-free analysis. Namely, we will compare the
algorithms based on the worst-case estimation error, where worst-case
is over all possible distributions $\cd$. Such an analysis can lead us
to the following type of conclusion: If two hypothesis classes have
roughly the same complexity, $C(\ch_1) \approx C(\ch_2)$, and the
number of available training examples is significantly larger than
this value of complexity, then for both hypothesis classes we are
going to have a small estimation error. Hence, in this case the
difference in prediction performance between the two methods will be
dominated by the approximation error and by the success of the
learning algorithm in approaching the best possible estimation
error. In our discussion below we disregard possible differences in
optimality which stem from algorithmic aspects and implementation
details. A rigorous comparison of training heuristics would certainly
be of interest and is left to future work.

For the approximation error we will provide even stronger results,
by comparing the approximation error of classes for \emph{any}
distribution. We rely on the following definition.
\begin{definition}\label{def:contains}
Given two hypothesis classes, $\ch,\ch'$, we say that $\ch$
  \emph{essentially contains} $\ch'$ if for any distribution, the
  approximation error of $\ch$ is at most the approximation
  error of $\ch'$. $\ch$ {\em  strictly contains} $\ch'$ if, in addition, there is a distribution for which the approximation error of $\ch$ is strictly smaller than that of $\ch'$.
\end{definition}

Our main findings are as follows (see a full comparison in Table~\ref{tab:summary}). The formal statements are given in \secref{sec:main}.
\begin{itemize}
%
\item The estimation errors of OvA, MSVM, and TC are all roughly the
  same, in the sense that $C(\ch) = \tilde{\Theta}(dk)$ for all of the
  corresponding hypothesis classes. The complexity of AP is
  $\tilde{\Theta}(dk^2)$. The complexity of ECOC with a code of length
  $l$ and code-distance $\delta$ is at most $\tilde{O}(dl)$
  and at least $d\delta/2$. It follows that for randomly
  generated codes, $C(\ch) = \tilde{\Theta}(dl)$. Note that this
  analysis shows that a larger code-distance yields a larger
  estimation error and might therefore hurt performance. This contrasts with previous ``reduction-based'' analyses of ECOC, which concluded that a larger code
  distance improves performance.
\item We prove that the hypothesis class of MSVM essentially contains
  the hypothesis classes of both OvA and TC. Moreover, these inclusions are strict.
  Since the estimation
  errors of these three methods are roughly the same, it follows that
  the MSVM method dominates both OvA and TC in terms of achievable prediction performance.
\item In the TC method, one needs to associate each leaf of the tree to a
  label. If no prior knowledge on how to break the symmetry is known, it is
  suggested in \cite{BeygelzimerLaRa07} to break symmetry by choosing a random
  permutation of the labels. We show that whenever $d \ll k$, for any
  distribution $\cd$, with high probability over the choice of a random
  permutation, the approximation error of the resulting tree would be close to
  $1/2$.  It follows that a random choice of a permutation is likely to yield a
  poor predictor.
\item We show that if $d \ll k$, for any distribution $\cd$, the
  approximation error of ECOC with a randomly generated code matrix is  likely to be close to $1/2$.
\item We show that the hypothesis class of AP essentially contains the hypothesis
  class of MSVM (hence also that of OvA and TC), and that there can be a
  substantial gap in the containment. Therefore, as expected, the relative performance of
  AP and MSVM depends on the well-known trade-off between estimation
  error and approximation error.
\end{itemize}
\begin{table}[ht]
\begin{tabular}{|l|c|c|c|c|c|}
\hline
& \textbf{TC} & \textbf{OvA} & \textbf{MSVM} & \textbf{AP} & \textbf{random ECOC}\\
\hline
\textbf{Estimation error} & $dk$ &  $dk$ & $dk$ & $dk^2$ & $dl$\\
\hline
\textbf{Approximation} & $\geq$ MSVM & $\geq$ MSVM & $\geq$ AP & smallest & incomparable \\
\textbf{error} & $\approx 1/2$ when $d \ll k$ & & & & $\approx 1/2$ when $d \ll k$ \\
\hline
\textbf{Testing run-time} & $d\log(k)$ & $dk$ & $dk$ & $dk^2$ & $dl$\\
\hline
\end{tabular}
\label{tab:summary}
\caption{Summary of comparison}
\end{table}

The above findings suggest that in terms of performance, it may be
wiser to choose MSVM over OvA and TC, and especially so when $d \ll k$. We
note, however, that in some situations (e.g. $d =k$) the prediction success of these methods can be similar, while TC has the advantage of having
a testing run-time of $d\log(k)$, compared to the testing run-time of
$dk$ for OvA and MSVM. In addition, TC and ECOC may be a good choice
when there is additional prior knowledge on the distribution or on how
to break symmetry between the different labels.


\subsection{Related work}

\cite{AllweinScSi00a} analyzed the multiclass error of ECOC as a
function of the binary error. The problem with such a
``reduction-based'' analysis is that such analysis becomes problematic
if the underlying binary problems are very hard. Indeed, our analysis
reveals that the underlying binary problems would be too hard if $d
\ll k$ and the code is randomly generated.  The experiments in
\cite{AllweinScSi00a} show that when using kernel-based SVM or
AdaBoost as the underlying classifier, OvA is inferior to random
ECOC. However, in their experiments, the number of classes is small
relative to the dimension of the feature space, especially if working
with kernels or with combinations of weak learners.

\cite{CrammerSi01a} presented experiments demonstrating that MSVM
outperforms OvA on several data sets. \cite{RifkinKl04} criticized the
experiments of \cite{CrammerSi01a,AllweinScSi00a}, and presented
another set of experiments demonstrating that all methods perform
roughly the same when the underlying binary classifier is very strong
(SVM with a Guassian kernel). As our analysis shows, it is not
surprising that with enough data and powerful binary classifiers, all
methods should perform well. However, in many practical applications,
we will prefer not to employ kernels (either because of shortage of
examples, which might lead to a large estimation error, or due to
computational constraint), and in such cases we expect to see a large
difference between the methods.

\cite{BeygelzimerLaRa07} analyzed the \emph{regret} of a specific
training method for trees, called Filter Tree, as a function of the regret of the binary
classifier. The regret is defined to be the difference between the
learned classifier and the Bayes-optimal classifier for the
problem. Here again we show that the regret values of the
underlying binary classifiers are likely to be very large whenever $d
\ll k$ and the leaves of the tree are associated to labels in a random
way. Thus in this case the regret analysis is problematic.  Several
authors presented ways to learn better splits, which corresponds to
learning the association of leaves to labels (see for example
\cite{BengioWeGr11} and the references therein). Some of our negative results
do not hold for such methods, as these do not randomly attach labels to
tree leaves.

\cite{DanielySaBeSh11} analyzed the properties of multiclass learning
with various ERM learners, and have also provided some bounds on the
estimation error of multiclass SVM and of trees. In this paper we both
improve these bounds, derive new bounds for other classes, and also
analyze the approximation error of the classes. To the best of our
knowledge, this is the first case of using VC theory to analyze the
approximation error of hypothesis classes.

\section{Definitions and Preliminaries} \label{sec:preliminaries}

We first formally define the hypothesis classes that we analyze in this paper.
\paragraph{Multiclass SVM (MSVM):} For $W\in \reals^{k\times (d+1)}$
define $h_{W}:\reals^d\to [k]$ by $h_{W}(x)=\text{argmax}_{i\in
  [k]}(W\bar x)_i$ and let $\cl=\{h_{W}:W\in \reals^{k\times
  (d+1)}\}$. Though NP-hard in general, solving the ERM problem with respect to $\cl$ can be
done efficiently in the realizable case (namely, whenever exists a
hypothesis with zero empirical error on the sample).

\paragraph{Tree-based classifiers (TC):} A tree-based multiclass
classifier is a full binary tree whose leaves are associated with
class labels and whose internal nodes are associated with binary
classifiers. To classify an instance, we start with the root node and
apply the binary classifier associated with it. If the prediction is
$1$ we traverse to the right child. Otherwise, we traverse to the left
child. This process continues until we reach a leaf, and then we output
the label associated with the leaf. Formally, a tree for $k$ classes
is a full binary tree $T$ together with a bijection $\lab:\leaf(T)
\rightarrow [k]$, which associates a label to each of the leaves. We
usually identify $T$ with the pair $(T,\lambda)$. The set of
internal nodes of $T$ is denoted by $N(T)$. Let $\ch\subset \{\pm
1\}^\cx$ be a binary hypothesis class. Given a mapping $C:N(T)
\rightarrow \ch$, define a multiclass predictor, $h_{C}:\cx\to[k]$, by
setting $h_C(x)=\lambda(v)$ where $v$ is the last node of the root-to-leaf path $v_1,\ldots v_m=v$ such that $v_{i+1}$ is the left
(resp. right) child of $v_{i}$ if $C(v_{i})(x)=-1$
(resp. $C(v_{i})(x)=1$). Let $\ch_T = \{h_C \mid C:N(T)\to
\ch\}$. Also, let $\ch_{\trees}=\cup_{T\text{ is a tree for $k$ classes
    }}\ch_T$. If $\ch$ is the class of linear separators over $\reals^d$, then for any tree $T$ the ERM problem with respect to $\ch_T$ can be solved efficiently in the realizable case. However, the ERM problem is NP-hard in the non-realizable case.

\paragraph{Error Correcting Output Codes (ECOC):} An ECOC is a code
$M\in \reals^{k\times l}$ along with a bijection $\lambda: [k]\to [k]$. We
sometimes identify $\lambda$ with the identity function and $M$ with
$(M,\lambda)$\footnote{The use of $\lambda$ here allows us to later consider codes with random association of rows to labels.}. Given a code $M$, and the result of $l$ binary classifiers represented by a vector $u \in \{-1,1\}^l$, the code selects a label via $\tilde M:\{-1,1\}^l\to[k]$, defined by $\tilde M(u) =
\lambda\left(\arg\max_{i\in[k]} \sum_{j=1}^l M_{ij}u_j\right)$.  Given binary classifiers $h_1,\ldots,h_l$ for each column in the code matrix,
the code assigns to the instance $x \in \cx$ the label $\tilde M(h_1(x),\ldots,h_l(x))$. Let $\mathcal
H\subset \{\pm 1\}^\cx$ be a binary hypothesis class. Denote by \mbox{$\ch_M\subseteq [k]^\cx$} the hypotheses class
$\ch_M=\{h:\cx \rightarrow [k] \mid \exists (h_1,\ldots,h_l)\in\mathcal H^l \text{ s.t. } \forall x\in\cx, h(x) = \tilde M (h_1(x),\ldots,h_l(x))\}.$

The \emph{distance} of a binary code, denoted by $\delta(M)$ for $M\in \{\pm 1\}^{k\times l}$, is the minimal \emph{hamming distance} between any two pairs of rows in the code matrix. Formally, the hamming distance between $u,v \in \{-1,+1\}^l$ is $\hamming(u,v) = |\{r:u[r]\ne v[r]\}|$, and $\delta(M)=\min_{1\le i<j\le k}\hamming(M[i],M[j])$. The ECOC paradigm described in \citep{DietterichBa95} proposes to choose a code with a large distance.

\paragraph{One vs. All (OvA) and All Pairs (AP):} Let $\ch \subset \{\pm 1\}^\cx$ and $k\ge 2$. In the OvA method we train $k$ binary
problems, each of which discriminates between one class and the
rest of the classes. In the AP approach all pairs of classes are
compared to each other. This is formally defined as two ECOCs. Define $M^{\OvA}\in \reals^{k\times k}$ to be the matrix
whose $(i,j)$ elements is $1$ if $i=j$ and $-1$ if $i \neq j$. Then,
the hypothesis class of OvA is $\ch_{\OvA}=\ch_{M^{\OvA}}$. For
the AP method, let $M^{\AP}\in \reals^{k\times \binom k2}$ be such
that for all $i \in [k]$ and $1 \le j < l \le k$, the coordinate corresponding to row $i$ and column $(j,l)$
is defined to be $-1$ if $i=j$, $1$ if $i=l$, and $0$ otherwise. Then,
the hypothesis class of AP is $\ch_{\AP}=\ch_{M^{\AP}}$.

Our analysis of the estimation error is based on
results that bound the sample complexity of multiclass learning.
The \emph{sample complexity} of an algorithm $A$ is the
function $m_A$ defined as follows: For $\epsilon,\delta>0$,
$m_A(\epsilon,\delta)$ is the smallest integer such that for every
$m\ge m_A(\epsilon,\delta)$ and every distribution $\cd$ on
$\cx\times \cy$, with probability of $>1-\delta$ over the
choice of an i.i.d. sample $S$ of size $m$,
\begin{equation}\label{eq:samplecomplexity}
\Err(A(S_m)) \le \min_{h \in \ch} \Err(h) + \epsilon ~.
\end{equation}
The first term on the right-hand side is the approximation error of
$\ch$. Therefore, the sample complexity is the number of examples
required to ensure that the estimation error of $A$ is at most $\epsilon$ (with high probability).
We denote the sample complexity of a class $\ch$ by
$m_\ch(\epsilon,\delta)=\inf_A m_A(\epsilon,\delta)$, where the
infimum is taken over all learning algorithms.



To bound the sample complexity of a hypothesis class we rely on upper and lower bounds on the
sample complexity in terms of two generalizations of the VC dimension
for multiclass problems, called the \emph{Graph dimension} and the
\emph{Natarajan dimension} and denoted $d_G(\ch)$ and $d_N(\ch)$. For
completeness, these dimensions are formally defined in the appendix.

\begin{theorem}\label{thm:sample-comp-bounds}{\cite{DanielySaBeSh11}}
  For every hypothesis class $\ch$, and for every ERM rule,
$$\Omega\left(\frac{d_N(\ch)+\ln(\frac{1}{\delta})}{\epsilon^2}\right)\le
m_{\ch}(\epsilon,\delta)\le m_{\textrm{ERM}}(\epsilon,\delta)\le O\left(\frac{\min\{d_N(\ch)\ln(|\cy|),d_G(\ch)\}+\ln(\frac{1}{\delta})}{\epsilon^2}\right)$$
\end{theorem}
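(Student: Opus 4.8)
The plan is to treat the three relations separately. The middle inequality $m_\ch(\epsilon,\delta)\le m_{\mathrm{ERM}}(\epsilon,\delta)$ is immediate from the definition $m_\ch=\inf_A m_A$, since ERM is one particular learning algorithm, so the infimum over all algorithms can only be smaller. The real content is the upper bound on $m_{\mathrm{ERM}}$ and the lower bound on $m_\ch$, and for both I would adapt the classical binary PAC machinery to the multiclass combinatorial dimensions.

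For the upper bound I would argue through uniform convergence: it suffices to show that, with probability $1-\delta$ over a sample of size $m$, the empirical error of \emph{every} $h\in\ch$ is within $\epsilon/2$ of its true error, for then the ERM output has error at most $\min_{h\in\ch}\Err(h)+\epsilon$. Via the standard symmetrization argument this reduces to bounding the number of distinct behaviors of the $0/1$ loss class on $2m$ points, and yields a sample complexity of order $(\log(\text{growth function})+\log(1/\delta))/\epsilon^2$. I would then bound the growth function in two ways, giving the two terms inside the minimum. First, the map $h\mapsto\big((x,y)\mapsto \mathbf{1}[h(x)\neq y]\big)$ defines a \emph{binary} class whose VC dimension is exactly $d_G(\ch)$ by the definition of G-shattering; the binary Sauer--Shelah lemma then bounds the growth function by $O(m^{d_G(\ch)})$, producing the $d_G(\ch)/\epsilon^2$ term. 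Second, I would invoke Natarajan's lemma, the multiclass analogue of Sauer--Shelah, which bounds the number of distinct restrictions of $\ch$ to $m$ points by $O\big((m\,|\cy|^2)^{d_N(\ch)}\big)$; taking logarithms introduces the factor $d_N(\ch)\ln|\cy|$ and produces the $d_N(\ch)\ln|\cy|/\epsilon^2$ term. The upper bound is the minimum of the two.

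For the lower bound I would use an N-shattering witness. Let $S=\{x_1,\dots,x_{d_N(\ch)}\}$ be N-shattered by $f_1,f_2$. I would build a family of distributions indexed by sign vectors $\sigma\in\{\pm1\}^{d_N(\ch)}$, each concentrated on $S$, where at $x_i$ the label is $f_1(x_i)$ with probability $(1+\sigma_i\epsilon')/2$ and $f_2(x_i)$ otherwise. Because $\ch$ N-shatters $S$, for every $\sigma$ some hypothesis attains the Bayes-optimal error, so the approximation error is identical across the family and an algorithm with small estimation error must effectively recover the majority label at each of the $d_N(\ch)$ points. A Le Cam / Fano-type information argument, exactly as in the binary VC lower bound, then shows that this requires $\Omega(d_N(\ch)/\epsilon^2)$ samples; the additive $\ln(1/\delta)/\epsilon^2$ term follows from a separate two-point construction that forces the stated confidence.

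The step I expect to be the main obstacle is the Natarajan-lemma route: both proving the multiclass growth-function bound and threading its $|\cy|$-dependence through symmetrization so that it appears only \emph{inside} the logarithm, rather than polynomially. Once that is in place, the graph-dimension bound and the lower bound are fairly direct transcriptions of the binary arguments.
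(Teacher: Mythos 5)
You should first note a mismatch of expectations: the paper contains no proof of this statement at all. Theorem~\ref{thm:sample-comp-bounds} is imported verbatim from \cite{DanielySaBeSh11} and used as a black box, so there is no in-paper argument to compare against; your sketch is in effect a reconstruction of the standard argument from the cited work, and as such it is essentially sound. The middle inequality is indeed definitional; the identification of $d_G(\ch)$ with the VC dimension of the loss class $\{(x,y)\mapsto \mathbf{1}[h(x)\ne y]\}$ is exactly right (and is really the content of the G-shattering definition); and the lower bound via a cube of distributions on an N-shattered set, with an Assouad/Le~Cam argument plus a separate two-point construction for the $\ln(1/\delta)$ term, is the standard route --- though you should state explicitly the one small step that makes the multiclass-to-binary reduction clean: under your constructed distributions every prediction outside $\{f_1(x_i),f_2(x_i)\}$ is always an error, so it is dominated, and any algorithm for $\ch$ induces one for the embedded binary cube.

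Two refinements are worth making. First, the plain symmetrization/growth-function route you describe yields $O\bigl((d\ln(1/\epsilon)+\ln(1/\delta))/\epsilon^2\bigr)$, an extra $\ln(1/\epsilon)$ factor relative to the bound as stated (and the paper stresses that the constants in the $O,\Omega$ are universal). The clean fix is to exploit your own reduction: once the loss class is a binary class of VC dimension $d_G(\ch)$, invoke the tight agnostic uniform-convergence bound for VC classes (\cite{Vapnik98} and its refinements) rather than re-deriving it by symmetrization. Second, the step you flag as the main obstacle --- threading Natarajan's lemma through symmetrization --- can be bypassed entirely: the paper's appendix quotes $d_G(\ch)\le 4.67\log_2(|\cy|)\,d_N(\ch)$ from \cite{Ben-DavidCeHaLo95}, so the $d_N(\ch)\ln(|\cy|)$ term in the minimum follows from the graph-dimension bound by direct substitution, with no separate growth-function argument. (Your direct route is not wrong --- Natarajan's lemma does place $|\cy|$ inside the logarithm automatically --- but it inherits the same superfluous $\ln(1/\epsilon)$ factor unless combined with a chaining-type bound.)
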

We note that the constants in the $O,\Omega$ notations are universal.

\section{Main Results} \label{sec:main}

In \secref{sec:sampleTheorems} we analyze
the sample complexity of the different hypothesis classes. We provide lower bounds on the Natarajan dimensions of the various hypothesis classes, thus concluding, in light of
\thmref{thm:sample-comp-bounds}, a lower bound on the sample complexity of \emph{any} algorithm. We also provide upper bounds on the graph dimensions of these hypothesis classes, yielding, by the same theorem, an upper
bound on the estimation error of ERM. In
\secref{sec:approxTheorems} we analyze the approximation error of
the different hypothesis classes.

\subsection{Sample Complexity} \label{sec:sampleTheorems}

Together with Theorem \ref{thm:sample-comp-bounds}, the following
theorems estimate, up to logarithmic factors, the sample complexity
of the classes under consideration. We note that these theorems
support the rule of thumb that the Natarajan and Graph dimensions are
of the same order of the number of parameters. The first theorem shows that the sample complexity of MSVM depends on $\tilde{\Theta}(dk)$.
\begin{theorem}\label{thm:sample_linear}
$d (k-1)\le d_N(\cl)\le d_G(\cl)\le O(dk\log(dk))$.
\end{theorem}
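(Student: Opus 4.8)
The statement is a chain of three inequalities, and I would treat the two outer ones separately, since the middle inequality $d_N(\cl)\le d_G(\cl)$ holds for every hypothesis class: if $S$ is N-shattered with witnesses $f_1,f_2$, then it is G-shattered with witness $f_1$, because any $g$ agreeing with $f_1$ on $T$ and with $f_2$ on $S\setminus T$ automatically disagrees with $f_1$ off $T$ (as $f_1$ and $f_2$ differ pointwise). So the real work is the lower bound $d(k-1)\le d_N(\cl)$ and the upper bound $d_G(\cl)\le O(dk\log(dk))$.

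For the lower bound I would exhibit an N-shattered set of size $d(k-1)$, organized into $k-1$ blocks $B_2,\dots,B_k$ of $d$ points each, with class $1$ playing the role of a common reference. On block $B_c$ I set the witness functions to $f_1\equiv c$ and $f_2\equiv 1$, so that realizing a subset $T$ amounts to making, on each $B_c$, exactly the points of $T\cap B_c$ receive label $c$ and the rest receive label $1$. I would fix $w_1=0$ (reference value $0$ everywhere) and design each row $w_c$, $c\ge 2$, to (i) shatter its own block against the reference and (ii) be strictly negative on every other block, so class $c$ never interferes elsewhere. The mechanism is to place the block centers $p_2,\dots,p_k$ in convex position in $\reals^d$ — for instance on a sphere, where arbitrarily many distinct points are all extreme — so that each $p_c$ admits a supporting affine functional $V_c$ with $V_c(p_c)=0$ and $V_c(p_{c'})<0$ for $c'\ne c$. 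Taking the $d$ points of $B_c$ to be tiny perturbations $p_c+\epsilon z_{c,i}$ with $z_{c,1},\dots,z_{c,d}$ affinely independent (hence shatterable by affine halfspaces), a small fine-tuning of $w_c$ around $V_c$ realizes any sign pattern on $B_c$ while, for $\epsilon$ small enough, preserving the strict negativity on all other blocks.

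For the upper bound I would use $d_G(\cl)=\VC(\cl^{*})$, where $\cl^{*}=\{(x,y)\mapsto \mathbb{1}[h(x)=y] : h\in\cl\}$ is the binary ``graph'' class, and bound its VC dimension by a growth-function argument. Fix any $m$ points $(x_1,y_1),\dots,(x_m,y_m)$. For $h_W\in\cl$, the bit $\mathbb{1}[h_W(x_t)=y_t]$ is a function of the signs of the $k-1$ quantities $\inner{w_{y_t}-w_i,\bar x_t}$, $i\ne y_t$, each linear in the parameter $W\in\reals^{k(d+1)}$. Ranging over all $t$, the whole agreement pattern is determined by the sign pattern of at most $N=mk$ hyperplanes through the origin in dimension $D=k(d+1)$; the number of such sign patterns is at most $(eN/D)^{D}=(em/(d+1))^{k(d+1)}$. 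Shattering forces $2^m\le (em/(d+1))^{k(d+1)}$, and solving this self-referential inequality yields $m=O(k(d+1)\log(k(d+1)))=O(dk\log(dk))$.

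I expect the main obstacle to be the lower bound, specifically guaranteeing that the $k$-way argmax resolves to exactly the two intended classes on every block and for every target subset simultaneously — i.e., ruling out interference from the other $k-2$ classes. The convex-position placement of the block centers, together with the scale separation provided by small $\epsilon$, is precisely what decouples the blocks: it lets each $w_c$ be tuned freely within its own block while remaining dominated on all the others. The upper bound is routine sign-pattern counting, where the only care needed is the standard treatment of ties in the argmax and the bookkeeping in solving the transcendental inequality.
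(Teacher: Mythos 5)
Your proposal is correct in substance, and on the lower bound it takes a genuinely different route from the paper; on the upper bound it is the paper's own argument in dual guise. For the upper bound, you count cells of the arrangement of the at most $mk$ homogeneous functionals $W \mapsto \inner{w_{y_t}-w_i,\bar x_t}$ in the parameter space $\reals^{k(d+1)}$, while the paper applies Sauer's lemma to halfspaces over $\reals^{(d+1)k}$ evaluated at the difference vectors $z_{i,j}=\phi(x_i,f(x_i))-\phi(x_i,j)$ --- these are exactly the same $mk$ linear functionals viewed from the other side, and both counts give $2^m \le (\mathrm{poly}(m,k))^{(d+1)k+O(1)}$, hence $m=O(dk\log(dk))$; your caveats about argmax ties are handled in the paper by the fixed tie-breaking conventions, and in your version by counting faces of the arrangement rather than open cells. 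For the lower bound, the paper is indirect: it cites Theorem \ref{thm:Linear_contain_FT} ($\cl$ essentially contains $\cw_{\trees}$; on a finite-support distribution the containment becomes exact agreement on the support, so shattered sets transfer) together with $d(k-1)\le d_N(\cw_T)$ from Theorem \ref{thm:sample_FT}, which itself rests on the auxiliary class $\cg^{l}$ and Lemma \ref{lem:linear-sep-are-reach}. Your direct N-shattering of $d(k-1)$ points by $\cl$ avoids that detour and is self-contained; note that the halfspace behaviours you need --- arbitrary signs on one block, strictly negative on all others --- are precisely the paper's class $\cf^{l}$ with $j=-1$, so your construction amounts to Lemma \ref{lem:linear-sep-are-reach} plugged directly into MSVM with $w_1=0$ (and since only the signs relative to the zero row matter, per-row scaling is free and no ties arise).

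One detail needs repair, and it is exactly the point you flagged as the crux: ``a small fine-tuning of $w_c$ around $V_c$'' fails if the perturbation directions $z_{c,i}$ have components along the normal of $V_c$. Then $V_c(p_c+\epsilon z_{c,i})=\Theta(\epsilon)$, the same order at which the tuning must act to dictate signs on $B_c$, so the tuning functional needs a linear part of constant order; its values on the other blocks are then also of constant order, while the supporting margin $\min_{c'\ne c}\bigl(-V_c(p_{c'})\bigr)$ shrinks as $k$ grows (points on a sphere crowd together), so strict negativity off $B_c$ is lost. The fix is the one used in the proof of Lemma \ref{lem:linear-sep-are-reach}: place the $d$ affinely independent points of $B_c$ \emph{inside} the supporting hyperplane $\{x: V_c(x)=0\}$ (possible since that hyperplane is $(d-1)$-dimensional and the paper fixes $d\ge 2$ throughout), so that $V_c$ vanishes identically on $B_c$; then $w_c=\alpha V_c + A_c$, where $A_c$ interpolates the desired signs on $B_c$ (extended to $\reals^d$ by composing with the orthogonal projection onto the hyperplane) and $\alpha$ is large, realizes every sign pattern on $B_c$ while remaining strictly negative on all other blocks. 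With that adjustment your lower bound is complete.
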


Next, we analyze the sample complexities of TC and ECOC. These methods rely on
an underlying hypothesis class of binary classifiers. While our main focus is
the case in which the binary hypothesis class is halfspaces over $\reals^d$,
the upper bounds on the sample complexity we derive below holds for any binary
hypothesis class of VC dimension $d+1$.

\begin{theorem}\label{thm:sample_FT} For every  binary hypothesis class of VC dimension $d+1$,
and for any tree $T$, $d_G(\ch_T) \leq d_G(\ch_{\trees}) \le O(dk\log(dk))$.
If the underlying hypothesis class is halfspaces over
$\reals^d$, then also
\[
d (k-1)\le d_N(\ch_{T}) \le d_G(\ch_T) \le d_G(\ch_{\trees}) \le
O(dk\log(dk)).
\]
\end{theorem}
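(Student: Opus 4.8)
The statement packages four inequalities, two of which are immediate: $d_G(\ch_T)\le d_G(\ch_{\trees})$ holds because $\ch_T\subseteq\ch_{\trees}$ and the graph dimension is monotone under class inclusion, while $d_N(\ch_T)\le d_G(\ch_T)$ holds for every class, since an N-shattered set with witnesses $f_1,f_2$ is G-shattered with witness $f_1$. So the real work reduces to the upper bound $d_G(\ch_{\trees})\le O(dk\log(dk))$, valid for any binary class of VC dimension $d+1$, and the lower bound $d(k-1)\le d_N(\ch_T)$ for halfspaces. I would prove these two separately.

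For the upper bound I plan a growth-function (Sauer--Shelah) counting argument. The number of distinct multiclass labelings that $\ch_{\trees}$ induces on $m$ points is at least $2^m$ whenever those points are G-shattered, because each of the $2^m$ agreement patterns with the witness forces a distinct labeling. So it suffices to upper bound this labeling count. For a fixed tree $T$ (with its $k-1$ internal nodes) and a fixed leaf-labeling, the label of each point is determined by the $\pm1$ outputs of the node classifiers along its root-to-leaf path; hence the labeling of all $m$ points is determined by the joint restriction of the $k-1$ node classifiers to the sample. Each node draws from a class of VC dimension $d+1$, so by Sauer--Shelah it contributes at most $(em/(d+1))^{d+1}$ patterns, and the $k-1$ nodes jointly at most $(em/(d+1))^{(d+1)(k-1)}$. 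Summing over the at most $C_{k-1}\le 4^{k-1}$ tree shapes and the $k!$ leaf-labelings gives a total of at most $4^{k-1}k!\,(em/(d+1))^{(d+1)(k-1)}=2^{O(k\log k+dk\log m)}$. Setting this $\ge 2^m$ yields $m\le O(k\log k+dk\log m)$, and the standard fact that $m\le a\log m+b$ implies $m\le O(a\log a+b)$ gives $m\le O(dk\log(dk))$. This argument never uses linearity, so it holds for any VC-dimension-$(d+1)$ binary class.

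For the lower bound I would exhibit, for an arbitrary tree $T$, an N-shattered set of $d(k-1)$ points, allotting $d$ points per internal node. At a node $v$, pick a representative leaf $a_v$ in its left subtree and $b_v$ in its right subtree, assign its $d$ points $S_v$ the witness values $f_2\equiv\lambda(a_v)$ and $f_1\equiv\lambda(b_v)$ (distinct, since $a_v\ne b_v$), and aim to realize, for any dichotomy, the halfspace at $v$ sending $S_v\cap T$ right and $S_v\setminus T$ left. The crucial sub-claim is that a single halfspace can both route every descendant-subtree cluster to the fixed side containing it and shatter $v$'s own $d$ points. I would establish this by an isolation construction: confine $S_v$ to a small ball in general position, place the left- and right-subtree clusters very far along $\pm e_1$, and observe that halfspaces with positive $e_1$-component of the normal still shatter $d$ points in general position (normalizing $w_1=1$ leaves exactly $d$ free affine parameters, enough for all $2^d$ sign patterns) while automatically sending everything far along $-e_1$ left and everything far along $+e_1$ right. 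Nesting this recursively over the tree---each subtree in its own far-away, rescaled region---lets the node classifiers act independently, because a node's classifier only affects points that actually reach it, namely points in its own subtree.

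The main obstacle is precisely the bookkeeping of this recursive isolation. I must choose the scales (ball radii and inter-region separations) so that, \emph{simultaneously} over all $2^{d(k-1)}$ dichotomies, each foreign point is classified uniformly at every node it passes through and therefore reaches a well-defined leaf equal to its prescribed $f_1$- or $f_2$-value. Concretely, a point of $S_v$ turning toward $b_v$ must be routed to the \emph{same} leaf by all deeper nodes, regardless of how those nodes shatter their own clusters; guaranteeing this requires that, relative to every node, such a point lie far enough along that node's routing direction that the finitely many shattering halfspaces---whose parameters are bounded once the local clusters are fixed---all classify it identically. The dimension count is reassuring (one routing direction plus $d-1$ free directions give exactly $d$ shatterable points per node, totalling $d(k-1)$), so I expect the construction to go through; the care is entirely in ordering the scale separations by tree depth and verifying these uniform-classification bounds.
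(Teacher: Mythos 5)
Your proposal is correct, and its two halves relate to the paper differently. The upper bound is essentially the paper's own argument: the paper also counts, for a G-shattered set $A$, the labelings realizable by $\ch_{\trees}$, bounding the number of trees (it uses the crude bound $k^k$ where you use $4^{k-1}k!$) times the Sauer--Shelah count for the $k-1$ node classifiers, and solves $2^{|A|}\le k^k|A|^{dk}$ to get $O(dk\log(dk))$; your version is the same argument with slightly sharper bookkeeping. For the lower bound, however, the paper takes a more modular route that dissolves exactly the obstacle you flag. It introduces an auxiliary combinatorial class $\cg^{l}$ on the domain $[d]\times[l]$, whose members behave arbitrarily on $[d]\times\{i\}$ and like a ``staircase'' elsewhere ($+j$ for $v>i$, $-j$ for $v<i$), proves once and for all that halfspaces over $\reals^d$ are \emph{richer} than $\cg^{l}$ (an explicit embedding: node $i$'s cluster of $d$ affinely independent points is placed at height $i$ along one axis, and an affine map $A(P(x,y))+10j(y-i)$ with slope $10$ dominating the $\pm1$ shattering values realizes each $g^{i,j}$), and then N-shatters $[d]\times N(T)$ purely combinatorially by linearly ordering the internal nodes so that each node's left subtree precedes its right subtree. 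This linear order is precisely your ``single routing direction $e_1$'' with in-order placement, but because the routing is encoded in the staircase definition rather than in geometry, there are no scales, radii, or depth-ordered separations to manage: the verification that each point reaches the correct leaf is a two-line case check. One point where your sketch needs correction rather than mere completion: you ``pick a representative leaf $a_v$ in the left subtree and $b_v$ in the right subtree,'' but these cannot be chosen freely---uniform classification of a foreign point at all deeper nodes forces them to be the rightmost leaf of the left subtree and the leftmost leaf of the right subtree (the paper's witnesses $g_2,g_1$), since a point entering a subtree from the routing side is classified the same way at every node inside it. With that fixed, your nested-isolation construction would go through by a finite induction on depth, but the paper's factorization through $\cg^{l}$ and the richness lemma buys a proof with no quantitative bookkeeping at all, and the intermediate class is reused verbatim for the ECOC, OvA, and AP lower bounds.
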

Theorems \ref{thm:sample_linear} and \ref{thm:sample_FT} improve
results from \cite{DanielySaBeSh11} where it was shown that $\lfloor
\frac{d}{2}\rfloor \lfloor \frac{k}{2}\rfloor \le d_N(\cl)\le
O(dk\log(dk))$, and for every tree $d_G(\ch_{T})\le O(dk\log(dk))$. Further it was shown that if $\ch$ is the set of halfspaces over $\reals^d$, then $\Omega\left(\frac{d k}{\log(k)}\right)\le
d_N(\ch_{T})$.

We next turn to results for ECOC, and its special cases OvA and AP.
\begin{theorem}\label{thm:sample_ECOC} For every $M\in \reals^{k\times
    l}$ and every binary hypothesis class of VC dimension $d$,
  $d_G(\ch_{M})\le O(dl\log(dl))$. Moreover, if $M\in \{\pm
  1\}^{k\times l}$ and the underlying hypothesis class is
  halfspaces over $\reals^d$, then
\[
d \cdot \delta(M)/2 ~\le~   d_N(\ch_{M}) ~\le~ d_G(\ch_M) ~\le~
O(dl\log(dl)) ~.
\]
\end{theorem}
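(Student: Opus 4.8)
My plan is to prove the three inequalities separately, noting that the middle one $d_N(\ch_M)\le d_G(\ch_M)$ holds for every class and every label set, so only the outer two require work.

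For the upper bound $d_G(\ch_M)\le O(dl\log(dl))$, which should hold for any binary class of VC dimension $d$ and any real matrix $M$, I would argue through the growth function. Since the multiclass label $h(x)=\tilde M(h_1(x),\dots,h_l(x))$ is a fixed function of the binary vector $(h_1(x),\dots,h_l(x))$, the number of distinct labelings that $\ch_M$ induces on a set of $m$ points is at most the number of distinct binary vectors, which by Sauer--Shelah is at most $\prod_{i=1}^l O(m^d)=O(m^{dl})$. The graph dimension of $\ch_M$ equals the VC dimension of the binary class $\{(x,y)\mapsto \mathbb{1}[h(x)=y]:h\in\ch_M\}$, whose growth function is dominated by that of $\ch_M$. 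Hence if this class shatters $D$ points then $2^D\le O(D^{dl})$, and solving this inequality yields $D=O(dl\log(dl))$. This step is routine.

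The lower bound $d\cdot\delta(M)/2\le d_N(\ch_M)$ for halfspaces over $\reals^d$ and $M\in\{\pm1\}^{k\times l}$ is the heart of the matter. Write $\delta=\delta(M)$, pick two rows $a\ne b$ with $\hamming(M[a],M[b])=\delta$, and let $J$ be the $\delta$ columns on which they differ; assume $\delta$ is even (the odd case costs a floor). The idea is to N-shatter a set of $\tfrac{\delta}{2}\cdot d$ points with $f_1\equiv\lambda(a)$ and $f_2\equiv\lambda(b)$. I would split $J$ into $\delta/2$ pairs and assign to each pair a block of $d$ points that halfspaces over $\reals^d$ can shatter, placing the $\delta/2$ blocks far apart in general position. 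Outside $J$ I set every classifier to the constant $M[a]_j=M[b]_j$. For a given target labeling, within each block I use that block's pair of columns to push the induced binary word toward the codeword of the intended label, while every other pair is made to behave neutrally on that block (the two classifiers of a pair take opposite signs relative to $M[a]$, so they cancel in the score difference between $a$ and $b$). A short calculation then shows that the induced word $u(x)$ lies within Hamming distance $\delta/2-1$ of the codeword of its intended label.

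Domination over all other labels --- which at first looks like the obstacle, since a third row might agree with $a$ on many coordinates --- follows cleanly from the minimum-distance property via the triangle inequality: for every other row $c$, $\hamming(u(x),M[c])\ge \hamming(M[a],M[c])-\hamming(M[a],u(x))\ge \delta-(\delta/2-1)=\delta/2+1$, so the intended codeword is strictly closest and is therefore selected by $\tilde M$. The genuine technical obstacle is instead the geometric realizability: I must place the $\delta/2$ blocks and choose the halfspaces so that, for each pair, one can simultaneously realize an arbitrary dichotomy on the pair's own block and the required canceling (opposite-sign) constants on every other block. I would handle this by clustering each block in a small, well-separated region, so that a hyperplane cutting through one block leaves all other blocks on fixed sides, and by orienting the two halfspaces of a pair to agree on their own block but fall on opposite sides of the remaining blocks; verifying that this can be done for every dichotomy simultaneously is the step that requires the most care.
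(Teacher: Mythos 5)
Your upper bound is exactly the paper's argument (Lemma~\ref{lemma:meta_lemma}): count the possible words $(h_1(x),\ldots,h_l(x))$ on a shattered set via Sauer's lemma and solve $2^D \le D^{dl}$; nothing to add there. The interesting divergence is in the lower bound, where your route is genuinely different from the paper's. The paper factors the bound through the notion of a \emph{$q$-sensitive vector}: Theorem~\ref{thm:sensitive_vec} shows that a $q$-sensitive vector $u$ yields $d_N(\cw_M)\ge dq$ by N-shattering $[d]\times[q]$ with induced words at Hamming distance at most one from $u$ (one column per block), and Lemma~\ref{lemma:sensitive_vec} extracts a $\lceil\Delta(M)/2\rceil$-sensitive vector from the two closest rows, where $\Delta(M)=\max_i\min_{j\ne i}\hamming(M[i],M[j])\ge\delta(M)$. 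You instead shatter directly: two columns per block, with the ``split'' pattern on foreign pairs, so that the induced word sits at distance $\delta/2-1$ from the intended codeword while the triangle inequality pushes every other row to distance at least $\delta/2+1$. Your two-columns-per-block trick buys strict domination, and so completely sidesteps the tie-breaking issues that the paper's construction must confront (its word can be equidistant from competing rows, which is why Lemma~\ref{lemma:sensitive_vec} splits into the cases $i_1<i_2$ and $i_1>i_2$ and leans on the argmax tie-breaking convention). What the paper's abstraction buys in exchange is reusability and slightly stronger constants: the same Theorem~\ref{thm:sensitive_vec} is invoked again for OvA (a $(k-1)$-sensitive vector, far beyond $\delta/2=1$) and for AP (a $\binom{k-1}{2}$-sensitive vector) in Theorem~\ref{thm:sample_OvA} --- bounds your two-closest-rows construction cannot see --- and it gives $d\lceil\Delta(M)/2\rceil$ rather than $d\lfloor\delta(M)/2\rfloor$.

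Two points to tighten. First, a small quantitative gap: for odd $\delta$ your pairing yields only $d\lfloor\delta/2\rfloor = d(\delta-1)/2$ shattered points, which is strictly below the claimed $d\,\delta/2$. You can recover $d\lceil\delta/2\rceil$ by treating the leftover differing column as a singleton block: on its $d$ points all $\lfloor\delta/2\rfloor$ pairs split, so the intended row sits at distance $\lfloor\delta/2\rfloor$ and, by the same triangle-inequality argument, every other row at distance at least $\lfloor\delta/2\rfloor+1$, still strict. Second, the geometric realizability you rightly flag as the delicate step is precisely the paper's Lemma~\ref{lem:linear-sep-are-reach}: every classifier in your construction is ``arbitrary on its own block, a prescribed constant on all other blocks,'' i.e., a member of the class $\cf^l$, and that lemma realizes $\cf^l$ by halfspaces --- place each block as $d$ affinely independent points on the hyperplane $\{x:\langle x,e_i\rangle=1\}$ close to the unit vector $e_i$ (so that $\langle x,e_{i'}\rangle<1$ for $i'\ne i$), then take an interpolating affine functional on that hyperplane plus a large multiple of $\langle x,e_i\rangle-1$, whose sign fixes the constant side elsewhere. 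Since each halfspace of a pair chooses its constant side independently, your opposite-sides (canceling) requirement comes for free, so your sketch does go through once routed through this construction.
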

We note if the code has a large distance, which is the case, for instance, in random codes, then $\delta(M) = \Omega(l)$. In this case, the bound is tight up to
logarithmic factors.

\begin{theorem}\label{thm:sample_OvA}
  For any binary hypothesis class of VC dimension $d$,
  $d_G(\ch_{\OvA})\le O(dk\log(dk))$ and $d_G(\ch_{\AP})\le
  O(dk^2\log(dk))$. If the underlying hypothesis class is halfspaces
  over $\reals^d$ we also have:
  \begin{align*}
&d(k-1)\le d_N(\ch_{\OvA})\le d_G(\ch_{\OvA})\le
O(dk\log(dk)) ~~~~\text{ and }\\
&\textstyle d\binom{k-1}{2}\le d_N(\ch_{\AP})\le d_G(\ch_{\AP})\le O(dk^2\log(dk)).
\end{align*}
\end{theorem}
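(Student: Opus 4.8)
The two upper bounds are immediate from the ECOC analysis. By definition $\ch_{\OvA}=\ch_{M^{\OvA}}$ is an ECOC with a code of length $l=k$, and $\ch_{\AP}=\ch_{M^{\AP}}$ is an ECOC with a code of length $l=\binom{k}{2}$. Plugging these values into the bound $d_G(\ch_M)\le O(dl\log(dl))$ of \thmref{thm:sample_ECOC} and absorbing the logarithmic factor (note $\log(d\binom{k}{2})=O(\log(dk))$) yields $d_G(\ch_{\OvA})\le O(dk\log(dk))$ and $d_G(\ch_{\AP})\le O(dk^2\log(dk))$ for any binary class of VC dimension $d$. Since $d_N\le d_G$ for every class, it remains only to prove the two Natarajan-dimension lower bounds in the halfspace case.

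\textbf{OvA lower bound.}
I would exhibit $d(k-1)$ points that are N-shattered. The plan is to split them into $k-1$ blocks $B_1,\dots,B_{k-1}$ of $d$ affinely independent points each, and use the witnesses $f_1\equiv i$, $f_2\equiv k$ on block $B_i$ (so $f_1\ne f_2$ everywhere). Recall that, with ties in the $\arg\max$ broken by the smaller index, the OvA rule outputs the smallest $i$ with $h_i(x)=1$, and class $k$ if there is none. Given a target subset $T$, I would set $h_k\equiv 1$ (realized by $w_k=(0,\dots,0,1)$) as a default, and for each $i\le k-1$ build $h_i$ realizing the dichotomy $(T\cap B_i,\,B_i\setminus T)$ on its own block while outputting $-1$ on every other block; this forces the output to be $i$ on $T\cap B_i$ and $k$ elsewhere. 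The point that needs care is building such an $h_i$, and here I would place each block $B_i$ inside a distinct supporting hyperplane of a polytope whose vertices are in convex position. In homogeneous coordinates the $d$ affinely independent points of $B_i$ span a subspace $V_i\subsetneq\reals^{d+1}$ whose one-dimensional orthogonal complement is spanned by the normal $v_i$ of that supporting hyperplane; thus $v_i$ vanishes on $B_i$ and has every other block strictly on one side. Writing $h_i=\sign(\inner{p_i+\beta v_i,\bar x})$ with $p_i\in V_i$ realizing the dichotomy and $\beta$ large of the correct sign, the $\beta v_i$ term is identically zero on $B_i$ (so the dichotomy survives) yet drives $h_i$ to $-1$ on all other blocks. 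This is exactly the construction underlying \thmref{thm:sample_linear}, so the $d(k-1)$ bound transfers to $\ch_{\OvA}$.

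\textbf{AP lower bound.}
The scheme is analogous but richer: I would use $\binom{k-1}{2}$ blocks, one per pair $\{a,b\}\subseteq[k-1]$, each of $d$ points, with witnesses $f_1\equiv a$, $f_2\equiv b$ on $B_{\{a,b\}}$. The key observation is that the AP prediction is a majority tournament, so it suffices that on $B_{\{a,b\}}$ both $a$ and $b$ win their pairwise games against every other class, while the single game $h_{\{a,b\}}$ decides the $a$-versus-$b$ match; the winner of that match then beats everyone and is the Condorcet, hence AP, winner. As before, $h_{\{a,b\}}$ shatters its own block through a vector in $V_{\{a,b\}}$, and the normal in $V_{\{a,b\}}^\perp$ controls its constant sign on the other blocks. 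The required background is consistent: the game $h_{\{a,c\}}$ must favour $a$ on every block containing $a$ and favour $c$ on every block containing $c$, and no block other than $B_{\{a,c\}}$ contains both, so there is no conflicting demand; the games involving the spare class $k$ carry no own block and only help.

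\textbf{The main obstacle.}
The crux is precisely this realizability for AP. Unlike OvA, where each classifier merely had to be $-1$ on all other blocks (a single supporting-hyperplane separation), here the affine hull of each block $B_{\{a,c\}}$ must place all $a$-blocks strictly on one side and all $c$-blocks strictly on the other, using only the one-dimensional freedom left in $V_{\{a,c\}}^\perp$. I therefore expect the heart of the proof to be an explicit placement of the $\binom{k-1}{2}$ cluster centres (for instance an algebraic, moment-curve-type arrangement) for which, simultaneously over all pairs, the normal to each block's span separates the $a$-blocks from the $c$-blocks in the prescribed manner. Once such a placement is fixed, the tournament arithmetic is immediate and gives $d\binom{k-1}{2}\le d_N(\ch_{\AP})$.
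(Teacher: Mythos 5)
Your upper bounds and your OvA lower bound are correct, and the OvA construction is essentially the paper's argument in disguise: the paper factors it through \thmref{thm:sensitive_vec} together with \lemref{lem:linear-sep-are-reach}, whose class $\cf^{l}$ (arbitrary behaviour on one block, a \emph{single} constant on all other blocks) is realized by halfspaces exactly via your supporting-hyperplane-plus-large-$\beta$ device; the all-negative vector $u=(-1,\ldots,-1)$ is $(k-1)$-sensitive for $M^{\OvA}$, which is precisely your ``default class'' bookkeeping with the min-index tie-break.

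The AP part, however, has a genuine gap, and it is exactly the point you flag as the main obstacle. Your plan requires each game classifier $h_{\{a,c\}}$ to take \emph{different} prescribed signs on different foreign blocks (favouring $a$ on every $a$-block and $c$ on every $c$-block), and with only the one-dimensional freedom left in $V_{\{a,c\}}^{\perp}$ this forces the affine hull of every block to simultaneously separate all $a$-blocks from all $c$-blocks. You never construct such a placement, and nothing in your argument shows one exists: it is a system of $\binom{k-1}{2}$ coupled separation constraints, qualitatively harder than the convex-position condition that sufficed for OvA. The paper sidesteps this need entirely by never asking the flipped winner to be a Condorcet winner. It exhibits a single $\binom{k-1}{2}$-sensitive vector $u$ --- the rotational tournament $u[i,j]=1$ iff $j-i\le\frac{k-1}{2}$ (for odd $k$) --- in which every class has score exactly $0$, so $\tilde M^{\AP}(u)=1$ by the tie-breaking convention, while flipping any single game $(i,j)$ with $1<i<j$ strictly raises the score of $i$ or of $j$ above all others, making it the unique argmax; the witnesses are $g_1\equiv 1$ and $g_2\in\{i,j\}$ on the block of game $(i,j)$, which is why the sensitive pairs are drawn from $\{2,\ldots,k\}$ and number $\binom{k-1}{2}$. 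With this decoding viewpoint, \thmref{thm:sensitive_vec} applies with $\cf^{l}$: every binary classifier is the single constant $u[j]$ on all foreign blocks, which is exactly the realizability you already established. So the fix is not a moment-curve placement but a change of combinatorial strategy: perturb a perfectly balanced tournament by one game rather than engineer tournament winners, whereupon the geometric demand collapses to the case your construction handles.
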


\subsection{Approximation error} \label{sec:approxTheorems}

We first show that the class $\cl$ essentially contains
$\ch_{\OvA}$ and $\ch_{T}$ for any tree $T$, assuming, of course, that  $\ch$ is the class of
halfspaces in $\reals^d$. We find this result quite surprising, since the sample complexity of all of these classes is of the same order.
\begin{theorem}\label{thm:Linear_contain_FT}
  $\cl$ essentially contains $\ch_{\trees}$ and
  $\ch_{\OvA}$. These inclusions are strict for $d \geq 2$ and $k\ge 3$.
\end{theorem}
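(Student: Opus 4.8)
\noindent The plan is to reduce both essential‑containment claims to a single approximation lemma: every classifier that is a tree (or, for OvA, a decision list) of halfspaces can be approximated, to arbitrary accuracy \emph{in measure} and under \emph{any} distribution $\cd$, by a member of $\cl$, i.e.\ by an $\argmax$ of $k$ affine functions, the two predictors disagreeing only on shrinking neighborhoods of the splitting hyperplanes. Since the approximation error is a minimum over the class and all the approximating classifiers lie in $\cl$, passing to the limit gives $\min_{h\in\cl}\Err(h)\le \Err(h')$ for every fixed tree/OvA classifier $h'$; taking $h'$ to be the minimizer then yields $\min_{h\in\cl}\Err(h)\le\min_{h'\in\ch_{\trees}}\Err(h')$ and likewise for $\ch_{\OvA}$. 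So essentially all the content is in the construction and in controlling the measure of the disagreement set.

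First I would prove the lemma for a fixed tree $(T,\lambda)$ with halfspaces $w_v$, $v\in N(T)$. Let $D$ be the height of $T$, let $\mathrm{dep}(v)$ be the depth of an internal node (root $=1$), and for a leaf $\ell$ let $P(\ell)\subseteq N(T)$ be the internal nodes on its root path, with $\sigma^\ell_v=+1$ if the path leaves $v$ to the right and $-1$ to the left. I set the row of $W$ indexed by the label $\lambda(\ell)$ to be the coefficient vector of $s_\ell(x)=\sum_{v\in P(\ell)}\rho^{\,D-\mathrm{dep}(v)}\,\sigma^\ell_v\inner{w_v,\bar x}$, for a large parameter $\rho$. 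The geometric weights are chosen so that for two leaves $\ell^\star,\ell'$ with lowest common ancestor $v^\star$, the gap $s_{\ell^\star}(x)-s_{\ell'}(x)$ is dominated by its $\rho^{\,D-\mathrm{dep}(v^\star)}$ term, whose sign is exactly that of the split $T$ performs at $v^\star$; a short calculation gives $s_{\ell^\star}(x)>s_{\ell'}(x)$ whenever $|\inner{w_{v^\star},\bar x}|>2G(x)/\rho$, where $G(x)=\max_v|\inner{w_v,\bar x}|$. Hence $h_W$ agrees with the tree predictor $h_C$ off the set $\bigcup_{v\in N(T)}\{x:0<|\inner{w_v,\bar x}|\le 2G(x)/\rho\}$, which decreases as $\rho\to\infty$ to a subset of the finitely many hyperplanes $\{\inner{w_v,\bar x}=0\}$. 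After perturbing each offset by an arbitrarily small amount so that $\cd$ gives these hyperplanes zero mass (the bad offsets are countable), continuity of measure drives the $\cd$‑measure of the disagreement set to $0$, as the lemma needs.

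For $\ch_{\OvA}$ I would feed the same machine the \emph{decision‑list} form of OvA: unfolding $M^{\OvA}$, a predictor in $\ch_{\OvA}$ with halfspaces $w_1,\dots,w_k$ outputs the least $i$ with $\inner{w_i,\bar x}>0$, and outputs $1$ when all $\inner{w_i,\bar x}<0$. On the region where exactly one halfspace is positive this is literally $\argmax_i\inner{w_i,\bar x}$, so the same‑weight member of $\cl$ already agrees, and the geometric‑weight sharpening extends the agreement across the nested ``elif'' structure exactly as for a caterpillar tree. The hard part will be the all‑negative region $A=\{x:\inner{w_i,\bar x}<0\ \forall i\}$, where OvA emits class $1$ a \emph{second} time: its class‑$1$ region $\{\inner{w_1,\bar x}>0\}\cup A$ is in general non‑convex, whereas every region of a member of $\cl$ is an intersection of halfspaces and hence convex, and $A$ does \emph{not} shrink with $\rho$. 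So no sharpening reproduces OvA exactly, and the crux is to show that $\cl$ nonetheless matches OvA's \emph{error} on $A$: I would assign $A$ in the sharpened scores to whichever label is $\cd$‑optimal there and charge the discrepancy against OvA's own unavoidable error on $A$, arguing the net error cannot increase. This measure‑theoretic accounting on the fallback region is the step I expect to demand the most care, and it is where any hidden assumption would surface.

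Finally, for strictness when $d\ge2$ and $k\ge3$, I would exhibit one distribution on which $\cl$ has zero approximation error while every tree and every OvA classifier has positive error. A clean choice is three classes supported on the $120^\circ$ sectors about the origin inside a $2$‑plane of $\reals^d$ (with the remaining $k-3$ classes given negligible mass): three linear scores along the sector bisectors realize this partition exactly, so $\cl$ attains zero error. No tree can, since its root split is a single hyperplane, which must cut through at least one sector, and with only $k$ leaves the remaining splits cannot repair the resulting positive‑measure error. OvA cannot either, because its class‑$1$ region always contains the full halfspace $\{\inner{w_1,\bar x}>0\}$, which is too large to be a $120^\circ$ sector. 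Hence the approximation error of $\cl$ is strictly smaller for this $\cd$, completing the strict‑containment claim.
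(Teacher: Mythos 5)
Your tree argument and your strictness example are essentially the paper's own: the paper likewise sets row $i$ of $W$ to a geometrically damped signed sum $w_i=\sum_{j} a^{-j}b_{i,j}\tilde w(v_{i,j})$ of the node separators along the root-to-leaf path, shows $h_W=h_C$ outside a small-margin event whose measure is driven to zero, and proves strictness via three $120^\circ$ sectors of the unit circle after reducing to $d=2$, $k=3$. Two small points there: perturb the offsets only in the \emph{positive} direction (the paper's $\tilde w = w+(0,\ldots,0,\gamma)$ does exactly this), since under the convention $\sign(0)=+1$ a two-sided perturbation can change $h_C$ itself on atoms lying on a splitting hyperplane; and your OvA-strictness one-liner misses the degenerate choice $h_1\equiv -1$, where the class-$1$ region is the polyhedron $\bigcap_i\{h_i=-1\}$ rather than a set containing a full halfspace (then class $2$'s region is a full halfplane and the same objection finishes the argument).

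The genuine gap is the OvA containment, exactly where you predicted hidden assumptions would surface --- but your proposed repair fails outright rather than merely needing care. The paper does no error accounting on the fallback region: it asserts in one line that $\cw_{\OvA}=\cw_T$ for a caterpillar tree and inherits the tree result. You correctly noticed that this identification breaks on $A=\bigcap_i\{h_i=-1\}$, where the minimal-index decoding makes OvA output label $1$ a second time, while the leaf--label bijection forces $A$ and $\{h_k=+1,\,h_1=\cdots=h_{k-1}=-1\}$ into a single leaf. However, your plan to ``charge the discrepancy against OvA's own unavoidable error on $A$'' cannot work, because OvA can be error-free on $A$. Take $d=2$, $k=3$, mass $1/4$ on each of the collinear points $(-3,0),(-1.5,0),(-0.5,0),(1,0)$ with labels $1,3,2,1$, and $h_i(x)=\sign(x_1+i-1)$ for $i=1,2,3$: this OvA predictor is correct on all four points, with $(1,0)$ handled by $\{h_1=+1\}$ and $(-3,0)$ handled by $A$, both decoded to label $1$. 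But every $h_W\in\cl$ (and every tree predictor) has convex label cells, which restrict to intervals on the supporting line; a label-$1$ interval containing both $(-3,0)$ and $(1,0)$ must swallow the label-$3$ and label-$2$ points, so the best error of $\cl$ (or of $\cw_{\trees}$) on this distribution is at least $1/4>0$. Hence ``the net error cannot increase'' is false, and no assignment of $A$ to a single $\cd$-optimal label rescues it: the obstruction is the convexity of the cells, not the label chosen on $A$. Note that the same example shows the paper's asserted identity $\cw_{\OvA}=\cw_T$ is itself false under the paper's stated tie-breaking (the zero-error OvA predictor above has a non-convex label-$1$ region, so it lies in no $\cw_T$); the identification is sound only under a decoding in which the all-negative word is decoded to the \emph{last} class, in which case OvA literally is a caterpillar tree up to the usual boundary fudge. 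So for the OvA clause you must either adopt and flag such a convention, or accept that the claim fails on distributions charging the fallback region; the error-transfer argument you sketch cannot close it.
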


One might suggest that a small increase in the dimension would perhaps allow us to embed $\cl$ in $\ch_T$ for some tree $T$ or for OvA. The next result shows that this is not the case.
\begin{theorem}\label{thm:embedding}
Any embedding into a higher dimension that allows $\ch_\OvA$ or $\ch_T$ (for some tree $T$ for $k$ classes)
to essentially contain $\cl$, necessarily embeds into a dimension of at least $\tilde\Omega(dk)$.
\end{theorem}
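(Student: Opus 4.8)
The plan is to reduce this statement about approximation-error containment to a purely combinatorial statement about the VC dimension of halfspaces over $\reals^D$, where $D$ is the (unknown) target dimension of the embedding $\phi:\reals^d\to\reals^D$. First I would unpack essential containment in the realizable regime: if $\ch_T^{(D)}$ (built from halfspaces over $\reals^D$, composed with $\phi$) essentially contains $\cl$, then for every labeling $y$ of every finite $S\subseteq\reals^d$ that is realizable by some $h_W\in\cl$, the tree $T$ must realize $y$ on $\phi(S)$ using some assignment of halfspaces to its nodes; the same holds for $\ch_\OvA^{(D)}$. This is the only consequence of essential containment I would use. In particular I work entirely with zero-error distributions supported on finite sets, which give the strongest constraint on $D$.

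The key structural observation is that in a tree the \emph{root} classifier carries no decoding ambiguity. Let the root of $T$ split $[k]$ into the label groups $L$ and $R$ of its two subtrees, so that $\max(|L|,|R|)\ge k/2$. If $T$ realizes a labeling $y$ on $\phi(S)$, the root halfspace must output $-1$ on $\{x\in S: y(x)\in L\}$ and $+1$ on $\{x\in S: y(x)\in R\}$; that is, halfspaces over $\reals^D$ pulled back through $\phi$ realize the dichotomy $\{y\in R\}$ versus $\{y\in L\}$. Ranging over all MSVM-realizable $y$, I obtain that the fixed class of halfspaces over $\reals^D$ must realize the entire family $\cf=\{\{x:h_W(x)\in R\}:W\in\reals^{k\times(d+1)}\}$ on $S$.

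The quantitative engine is a lower bound on $\VC(\cf)$. Here $\{x:h_W(x)\in R\}=\bigcup_{i\in R}\bigcap_{j\in L}\{x:\langle w_i-w_j,\bar x\rangle\ge 0\}$, and by sending all but one term of the union to the empty set I obtain that $\cf$ contains every intersection of $|L|$ arbitrary halfspaces in $\reals^d$ (the normals $w_i-w_j$ being free and independent); by the complement-invariance of VC dimension the same gives intersections of $|R|$ halfspaces. Using the standard fact that an intersection of $m$ halfspaces in $\reals^d$ has VC dimension $\Omega(dm)$, together with $\max(|L|,|R|)\ge k/2$, I would exhibit a set $S\subseteq\reals^d$ of size $\tilde\Omega(dk)$ shattered by $\cf$ through genuine MSVM labelings: for each $A\subseteq S$ there is a $W$ with $\{h_W\in R\}\cap S=A$, and the corresponding realizable labeling forces the root halfspace to separate $A$ from $S\setminus A$. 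Hence halfspaces over $\reals^D$ shatter $\phi(S)$, and since composing with the fixed map $\phi$ cannot increase the VC dimension, $D+1\ge\VC(\text{halfspaces over }\reals^D)\ge|S|=\tilde\Omega(dk)$, which is the claim for trees.

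For $\ch_\OvA$ the same scheme applies with the class-$1$-versus-rest subproblem playing the role of the root split, the relevant family being the MSVM class-$1$ regions $\bigcap_{j\ne 1}\{x:\langle w_1-w_j,\bar x\rangle\ge 0\}$, which are again arbitrary intersections of $k-1$ halfspaces and so have VC dimension $\tilde\Omega(dk)$. The main obstacle, and the step I expect to require the most care, is that the $\arg\max$-over-signs decoding of OvA is not as rigid as a tree root: a class-$1$ point can be classified correctly through the ``all-negative'' tie-break cell $\bigcap_j\{g_j=-1\}$ rather than through $\{g_1=+1\}$, so $g_1$ need not realize the class-$1$ region exactly. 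I would neutralize this slack by choosing the hard instance so that the all-negative cell cannot absorb the shattered class-$1$ points---e.g.\ by forcing every other label to occur near each class-$1$ point so that correct OvA classification genuinely forces $g_1$ to shatter $S$---or, alternatively, by routing the OvA argument through the tree bound. Once the decoding slack is controlled, the remaining VC counting is identical to the tree case.
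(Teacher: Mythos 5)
Your proposal is correct in substance but takes a genuinely different route from the paper. The paper binarizes at the \emph{shallowest leaf}: with $\phi$ the indicator of the class at a leaf of minimal depth $l\le\log_2 k$, $\phi\circ\cl$ is the class of convex polyhedra with $k-1$ faces, so $\VC(\phi\circ\cl)\ge d(k-1)$, while $\phi\circ\cw^{d'}_T$ consists of polyhedra with at most $\log_2 k$ faces, whose VC dimension is bounded by $O(d'\log k\,\log(d'\log k))$ via the Sauer-based Lemma~\ref{lemma:meta_lemma}; comparing the two sides yields $d'=\tilde\Omega(dk)$. You binarize at the \emph{root} instead, which is slack-free for exactly the reason you state, and there the tree side is a single halfspace over $\reals^{D}$, so no Sauer counting is needed at all: once you verify (as you do) that the family $\{x:h_W(x)\in R\}$ contains all intersections of $\max(|L|,|R|)\ge k/2$ arbitrary halfspaces (by burying all rows of $R$ but one and invoking complement-invariance for the other side), the standard $\Omega(dm)$ lower bound for $m$-faced polyhedra produces a shattered set of size $\Omega(dk)$, realized by genuine MSVM labelings, which the root halfspace must then shatter through $\iota$; this forces $D+1\ge\Omega(dk)$. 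Your version is thus slightly stronger for trees (no logarithmic loss) and more elementary on the tree side, whereas the paper's leaf-based binarization pays the Sauer log factors but needs no analysis of the root split. Your reduction of essential containment to exact realization on finite-support zero-error distributions (the infimum is attained since the error is quantized on a finite support) matches what the paper's proof implicitly uses.

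One caveat on OvA: of your two proposed fixes for the decoding slack, the proximity-based one does not work as stated, because the embedding $\iota$ is an arbitrary map into $\reals^{D}$ --- it need not be continuous or linear --- so placing points with other labels ``near'' a class-$1$ point in $\reals^d$ imposes no constraint whatsoever after embedding. Your fallback, routing OvA through a tree, is exactly what the paper does (``there exists a tree that implements OvA,'' i.e.\ a caterpillar tree) and is sound modulo the usual tie-breaking conventions. Alternatively, the slack can be killed directly inside OvA by exploiting the minimal-index tie-break: restrict to MSVM labelings that never use label $1$ (bury the row $w_1$); then a point labeled $2$ forces $g_2=+1$, since the all-negative default decodes to label $1$ and is therefore wrong, while a point labeled $j\ge 3$ forces $g_2=-1$. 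Hence $g_2\circ\iota$ must realize the class-$2$ region exactly, and this region still ranges over all intersections of $k-2$ arbitrary halfspaces, so the same VC count as in your tree argument goes through with no slack.
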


The next theorem shows that the approximation
error of AP is better than that of MSVM (and hence also better than
OvA and TC). This is expected as the sample complexity of AP is
considerably higher, and therefore we face the usual trade-off between
approximation and estimation error.
\begin{theorem}\label{thm:AP_contain_Linear}
  $\ch_{\AP}$ essentially contains $\cl$. Moreover, there is a constant $k^*>0$, independent of $d$, such that the inclusion is strict  for all $k\ge k^*$.
\end{theorem}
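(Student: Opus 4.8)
The plan is to prove the two assertions separately: first the essential containment $\ch_{\AP}\supseteq\cl$ (which I expect to hold for every $d$), and then strictness once $k$ exceeds a small absolute constant. The conceptual key throughout is to recognize the All-Pairs decoding as \emph{tournament out-degree scoring}. Given $h_W\in\cl$ with rows $w_1,\dots,w_k$ and writing $s_i:=\inner{w_i,\bar x}$, I would instantiate the pairwise halfspace for column $(j,l)$ (with $j<l$) as $h_{(j,l)}(x)=\sign(s_l-s_j)$. Collecting the contributions to the AP score of class $i$ from the columns touching $i$, one finds that the score equals $\sum_{i'\ne i}\epsilon_{i,i'}(x)$, where $\epsilon_{i,i'}(x)=+1$ exactly when $s_i>s_{i'}$ (``$i$ beats $i'$'') and $-1$ otherwise; hence it equals $2\,\mathrm{outdeg}(i)-(k-1)$, so maximizing it is the same as maximizing the out-degree of $i$ in the tournament whose edges are the pairwise sign comparisons.

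Since every edge $i$ vs.\ $i'$ is decided by $\sign(s_i-s_{i'})$, this tournament is the total order of the reals $s_1,\dots,s_k$ (with ties broken by index once $\sign(0)$ is fixed), and in particular it is \emph{transitive}. In a transitive tournament the top element has out-degree $k-1$ and is the \emph{unique} maximizer, and it is precisely $\argmax_i s_i=h_W(x)$. Thus the constructed AP hypothesis equals $h_W$ pointwise, giving $\cl\subseteq\ch_{\AP}$ as function classes; consequently, for every distribution the approximation error of $\ch_{\AP}$ is at most that of $\cl$, which is the first claim.

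For strictness I would exploit the converse phenomenon: AP can realize \emph{non-transitive} (cyclic) tournaments, which lets one class win on a non-convex region, something $\cl$ can never do because each cell $\{x:h_W(x)=i\}$ is an intersection of halfspaces and hence convex. I would work on a line inside $\reals^d$ (making the construction dimension-free) and take a non-atomic distribution supported on three disjoint intervals $I_1,I_2,I_3$ (left, middle, right) carrying labels $1,2,1$. For $k=3$ I choose the three pairwise thresholds so that on $I_1$ the tournament is transitive with $1$ on top, on $I_2$ transitive with $2$ on top, and on $I_3$ the classes $1,2,3$ form a $3$-cycle in which $1$ wins by the tie-break; this gives an AP classifier with zero error. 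For $k>3$ I pad by forcing classes $4,\dots,k$ to lose every relevant comparison, which adds the same $k-3$ to the out-degrees of $1,2,3$ and keeps them strictly on top, so the same labeling is realized. On the MSVM side, the interval $\{x:h_W(x)=1\}$ that meets both $I_1$ and $I_3$ in positive measure must contain the gap and hence cover $I_2$, forcing output $1$ (error) on almost all of $I_2$; otherwise it misses positive mass of $I_1$ or $I_3$. Either way $\Err(h_W)\ge\min_t\mathrm{mass}(I_t)>0$ for every $W$, while the AP error is $0$, giving strictness for all $k\ge k^\star$ with a small absolute constant $k^\star$ (the construction above yields $k^\star=3$).

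The main obstacle is this strictness lower bound, specifically arguing that \emph{every} $h_W$ errs by a fixed constant regardless of the behavior on decision boundaries; I rely on the convexity of the MSVM cells together with the choice of a non-atomic distribution on intervals, so that boundary ties contribute zero mass. A secondary, bookkeeping-level obstacle is the tie handling ($\sign(0)$ and $\argmax$ ties) in the containment direction, needed so that the pointwise identity holds on all of $\reals^d$ and not merely off a finite union of hyperplanes; fixing one consistent convention, or an infinitesimal perturbation of $W$, settles this.
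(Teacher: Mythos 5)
Your proposal is correct, and while your containment argument coincides with the paper's, your strictness argument takes a genuinely different route. For containment, the paper uses exactly your construction: for column $(i,j)$ it takes the halfspace $x\mapsto\sign(\inner{W[j]-W[i],\bar{x}})$ and observes that All-Pairs decoding then recovers $\argmax_i \inner{W[i],\bar x}$; your tournament/out-degree framing is the same argument made explicit. One caveat you rightly flag and the paper silently skips: under the paper's stated conventions ($\sign(0)=1$ and $\argmax$ broken toward the \emph{minimal} index), the naive pairwise construction resolves ties toward the \emph{largest} tied index, so the claimed pointwise identity fails on tie sets; your fix (perturb the biases of $W$ by $t\cdot(k-i)$, note the perturbed AP classifier converges pointwise to $h_W$ as $t\to 0$, and pass to the limit in the error) does repair this, and is needed to get the approximation-error inequality for distributions with atoms --- so on this point you are more careful than the paper. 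For strictness, the paper argues non-constructively: by its Theorems \ref{thm:sample_OvA} and \ref{thm:sample_linear}, $d_N(\cw_{\AP})\ge d\binom{k-1}{2}$ eventually exceeds $d_G(\cl)=O(dk\log(dk))$ (already at $d=2$), so for $k\ge k^*$ there is a set $N$-shattered by $\cw_{\AP}$ on which some $h\in\cw_{\AP}$ realizes a labeling no $g\in\cl$ can match, and the uniform distribution on that labeled set gives $\Err^*_\cd(\cl)>0=\Err^*_\cd(\cw_{\AP})$. Your route instead exhibits an explicit one-dimensional distribution (labels $1,2,1$ on three intervals), realizes it in $\ch_{\AP}$ via a cyclic tournament on the rightmost interval (your score bookkeeping checks out, including the $3$-way tie decoding to label $1$ and the padding to $k>3$), and rules out $\cl$ by convexity of the MSVM cells (each cell is an intersection of open/closed halfspaces, hence an interval on the line), yielding a constant error gap. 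What each buys: the paper's argument is two lines given its dimension machinery and fits the paper's theme of using combinatorial dimensions for approximation error, but leaves $k^*$ unspecified; your construction is elementary, dimension-independent, gives the explicit and essentially optimal constant $k^*=3$, and isolates the structural reason AP is strictly richer --- it can realize non-transitive comparison patterns and hence non-convex label regions, which is the very phenomenon the paper only gestures at informally when discussing random ECOC versus MSVM.
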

For a random ECOC of length $o(k)$, it is easy to see that it does not contain MSVM, as MSVM has higher complexity. It is also not contained in MSVM, as it generates non-convex regions of labels.

We next derive absolute lower bounds on the approximation errors of
ECOC and TC when $d \ll k$. Recall that both methods are
built upon binary classifiers that should predict $h(x)=1$ if the
label of $x$ is in $L$, for some $L \subset [k]$, and should predict
$h(x)=-1$ if the label of $x$ is not in $L$.
As the following lemma shows, when the
partition of $[k]$ into the two sets $L$ and $[k]\setminus L$ is arbitrary and
balanced, and $k \gg d$, such binary
classifiers will almost always perform very poorly.
\begin{lemma}\label{lemma:split_classes}
  There exists a constant $C>0$ for which the following holds. Let
  $\ch\subseteq \{\pm 1\}^\cx$ be any hypothesis class of VC-dimension
  $d$, let $\mu \in (0,1/2]$, and let $\cd$ be any distribution over
  $\cx\times
  [k]$ such that $\forall i\;\prob_{(x,y)\sim
    \cd}(y=i)\le\frac{10}{k}$. Let $\phi:[k]\to \{\pm 1\}$ be a
  randomly chosen function which is sampled according to one of the
  following rules: (1)
For each $i \in [k]$, each coordinate $\phi(i)$ is chosen
  independently from the other coordinates and
  $\prob(\phi(i)=-1)=\mu$; or
(2) $\phi$ is chosen uniformly among all functions satisfying
  $|\{i\in [k]:\phi(i)=-1\}|=\mu k$.

Let $\cd_\phi$ be the distribution over $\cx\times\{\pm 1\}$ obtained
by drawing $(x,y)$ according to $\cd$ and replacing it with $(x,\phi(y))$.  Then, for any $\nu >0$, if $k\ge
C\cdot\left(\frac{d+\ln(\frac{1}{\delta})}{\nu^2}\right)$, then with
probability of at least $1-\delta$ over the choice of $\phi$, the
approximation error of $\ch$ with respect to $\cd_\phi$ will be at least
$\mu-\nu$.
\end{lemma}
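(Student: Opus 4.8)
The plan is to show that matching the randomly relabeled classes is essentially as hard as predicting $k$ unbiased coin flips, and that a class of VC dimension $d$ is far too weak to do this once $k\gg d$. First I would rewrite the error analytically. Writing $q_i = \prob_{(x,y)\sim\cd}[y=i]$ and $m_i(h) = \mathbb{E}_{(x,y)\sim\cd}[h(x)\mid y=i]\in[-1,1]$, a direct computation (using $h(x),\phi(y)\in\{\pm1\}$) gives
\[
\Err_{\cd_\phi}(h) = \tfrac12\bigl(1 - \textstyle\sum_{i=1}^k q_i\,\phi(i)\,m_i(h)\bigr).
\]
I would then center the labels by setting $\epsilon_i = \phi(i) - (1-2\mu)$, so $\mathbb{E}[\epsilon_i]=0$ and $|\epsilon_i|\le 2$. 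Since $\sum_i q_i m_i(h) = \mathbb{E}_{\cd}[h(x)]\le 1$ and $1-2\mu\ge 0$ (this is exactly where the hypothesis $\mu\le\tfrac12$ enters), splitting the sum yields
\[
\inf_{h\in\ch}\Err_{\cd_\phi}(h) \ge \mu - \tfrac12 Z,\qquad Z := \sup_{h\in\ch}\textstyle\sum_{i=1}^k q_i\,\epsilon_i\,m_i(h).
\]
It therefore suffices to prove that $Z\le 2\nu$ with probability at least $1-\delta$ over $\phi$. The quantity $Z$ is precisely a biased, reweighted \emph{Rademacher-type complexity}, which is the formal expression of the ``predicting coin flips'' intuition.

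Second, I would bound $\mathbb{E}_\phi Z$. The difficulty is that $m_i(h)$ is a genuine average over the class-conditional distribution, so $\{(m_i(h))_i : h\in\ch\}$ is a continuum and cannot be union-bounded directly. I would remove this by introducing one independent representative $x_i\sim\cd(\cdot\mid y=i)$ per class: since $m_i(h)=\mathbb{E}_{x_i}[h(x_i)]$, Jensen's inequality gives, for every fixed $\phi$, $Z \le \mathbb{E}_{x_1,\dots,x_k}\bigl[\max_{\sigma}\sum_i q_i\epsilon_i\sigma_i\bigr]$, where for fixed representatives $\sigma$ ranges over the sign patterns $\{(h(x_i))_i : h\in\ch\}$, whose number is at most the growth function $(ek/d)^d$ by Sauer's lemma. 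Using the assumption $\max_i q_i\le 10/k$, so that $\sum_i q_i^2\le (\max_i q_i)\sum_i q_i\le 10/k$, a chaining (Dudley) bound for the sub-Gaussian process indexed by a VC class of dimension $d$ then yields $\mathbb{E}_\phi Z = O(\sqrt{d/k})$. I would emphasize that the cruder Massart finite-class lemma would cost an extra $\sqrt{\log k}$ factor here; obtaining the clean rate $\sqrt{d/k}$ — and hence the stated threshold with no $\log k$ term — is exactly why chaining is required.

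Third, I would upgrade this to a high-probability statement. Under rule (1), $Z$ is a function of the independent coordinates $\phi(1),\dots,\phi(k)$, and changing a single $\phi(i)$ moves $Z$ by at most $2q_i\le 20/k$; McDiarmid's inequality then gives $Z\le \mathbb{E}_\phi Z + O(\sqrt{\ln(1/\delta)/k})$ with probability $\ge 1-\delta$. Combining with the previous step, $Z = O(\sqrt{(d+\ln(1/\delta))/k})$, which drops below $2\nu$ once $k\ge C(d+\ln(1/\delta))/\nu^2$ for a suitable universal constant $C$, and then $\inf_h\Err_{\cd_\phi}(h)\ge \mu-\tfrac12 Z\ge \mu-\nu$, as claimed. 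For rule (2) the coordinates are no longer independent, but I would realize $\phi$ as a uniformly random permutation of a fixed vector with exactly $\mu k$ entries equal to $-1$: the bounded-difference argument (now over transpositions) still gives concentration for sampling without replacement, and the expectation bound transfers either by the same symmetrization or by coupling to rule (1).

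The single hardest point is the second step: converting ``VC dimension $d$'' into the sharp complexity bound $O(\sqrt{d/k})$ for the \emph{averaged} quantity $Z$, i.e.\ simultaneously (a) collapsing the continuum of hypotheses to finitely many sign patterns via the representative points, and (b) eliminating the parasitic logarithmic factor through chaining so that the final threshold matches $k=\Omega((d+\ln(1/\delta))/\nu^2)$ rather than an inferior $k=\Omega(d\log(k/d)/\nu^2)$.
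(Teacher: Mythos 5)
Your proposal is correct in outline, but it takes a genuinely different route from the paper. The paper's proof never forms the bias--complexity decomposition $\Err_{\cd_\phi}(h)=\tfrac12\bigl(1-\sum_i q_i\phi(i)m_i(h)\bigr)\ge\mu-\tfrac12 Z$ that you use; instead it discretizes first and randomizes second: by the VC uniform-convergence theorem it finds a \emph{single} sample $S$ of size $m=O((d+k)/\nu^2)$ such that, simultaneously for all $2^k$ label maps $\phi$ (a union bound at confidence $2^{-(k+2)}$ each), the approximation error of $\ch$ on $\cd_\phi$ is within $\nu/2$ of the empirical one on $S_\phi$, together with a Markov-inequality event ensuring $\sum_i\hat p_i^2<120/k$; it then applies Hoeffding (rule (1)) or a bespoke sampling-without-replacement variant (its Lemma \ref{lem:chern_substitute}, rule (2)) to each of the at most $(em/d)^d$ restrictions $h|_S$ given by Sauer's lemma, and union-bounds over those. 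Your argument replaces this double discretization with a direct stochastic-process bound: Jensen via one representative point per class to collapse $\ch$ to realizable sign patterns, Dudley chaining with Haussler's covering bound (valid for the weighted empirical measure since $\sum_i q_i^2\le 10/k$) to get $\mathbb{E}_\phi Z=O(\sqrt{d/k})$, and McDiarmid (over coordinates for rule (1), over transpositions for rule (2), with Hoeffding's without-replacement sub-Gaussianity or a coupling transferring the expectation bound) for concentration. Each approach buys something: yours is sharper --- the paper's own proof as written actually requires $k=\Omega\bigl((d\ln(1/\nu)+\ln(1/\delta))/\nu^2\bigr)$, with a parasitic $\ln(1/\nu)$ absent from the lemma statement, and your chaining step is precisely what removes it (Massart's finite-class lemma in its place would reintroduce a logarithm, as you note) --- while the paper's proof uses only elementary tools (uniform convergence, Sauer, Hoeffding) and is the instance of its advertised technique of using VC theory to control approximation error via a finite witness sample. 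The only places where your sketch leaves real work are ones you flag yourself: writing out the rule-(2) expectation transfer rigorously (the paper needs its own substitute lemma for exactly this reason), and the bookkeeping that the sign patterns in the chaining step are indexed by the random representatives, which Fubini handles since the $x_i$'s are independent of $\phi$.
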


As the corollaries below show, Lemma \ref{lemma:split_classes}
entails that when $k \gg d$, both random ECOCs with a small code length, and balanced trees with a random labeling of the leaves, are expected to perform very poorly.


\begin{corollary}\label{cor:FT_approx_poorly}
  There is a constant $C>0$ for which the following holds. Let
  $(T,\lambda)$ be a tree for $k$ classes such that
  $\lambda:\operatorname{leaf}(T)\to [k]$ is chosen uniformly at
  random. Denote by $k_L$ and $k_R$ the number of leaves of the left
  and right sub-trees (respectively) that descend from root, and let
  $\mu=\min\{\frac{k_1}{k},\frac{k_2}{k}\}$. Let $\ch\subseteq \{\pm
  1\}^\cx$ be a hypothesis class of VC-dimension $d$, let $\nu >0$, and let $\cd$ be any distribution over $\cx\times [k]$
  such that $\forall i\;\prob_{(x,y)\sim
    \cd}(y=i)\le\frac{10}{k}$. Then, for $k\ge C\cdot \left(\frac{d+\ln(\frac{1}{\delta})}{\nu^2}\right)$, with
  probability of at least $1-\delta$ over the choice of $\lambda$, the
  approximation error of $\ch_T$ with respect to $\cd$ is at least
  $\mu-\nu$.
\end{corollary}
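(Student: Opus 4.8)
The plan is to reduce the statement to Lemma~\ref{lemma:split_classes} by isolating the binary classifier that sits at the \emph{root} of $T$. The key observation is that for any $C:N(T)\to\ch$ and any instance $x$ with true label $y$, if the root classifier $C(v_{\mathrm{root}})$ sends $x$ into the subtree that does \emph{not} contain the leaf $\lambda^{-1}(y)$, then the leaf eventually reached lies in the wrong subtree and hence carries a label different from $y$; here we crucially use that $\lambda$ is a bijection, so $y$ labels exactly one leaf. Thus every misrouting at the root forces a misclassification, giving
\[
\Err(h_C)\ \ge\ \prob_{(x,y)\sim\cd}\bigl[\,C(v_{\mathrm{root}})\text{ routes }x\text{ into the wrong subtree}\,\bigr].
\]
Minimizing over all $h_C\in\ch_T$, during which the root classifier $C(v_{\mathrm{root}})$ ranges over all of $\ch$ independently of the other node assignments, reduces the approximation error of $\ch_T$ to the approximation error of a single binary problem over $\ch$.

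Next I would recast that binary problem as an instance of \lemref{lemma:split_classes}. Let $A\subseteq[k]$ be the set of labels whose leaves lie in the smaller of the two root subtrees, so $|A|=\min\{k_L,k_R\}=\mu k$ with $\mu\in(0,1/2]$, and define $\phi:[k]\to\{\pm1\}$ by $\phi(i)=-1$ iff $i\in A$. Correct routing of $(x,y)$ amounts to the root classifier correctly predicting on which side of the root $\lambda^{-1}(y)$ lies, i.e. predicting $\phi(y)$ up to a global sign that depends only on whether the smaller subtree is the left or the right child. Hence the root routing error equals $\Err_{\cd_\phi}(g)$ for $g\in\ch$, or $\Err_{\cd_{-\phi}}(g)$ in the opposite convention; in either case the minimal routing error is the approximation error of a VC-dimension-$d$ class with respect to $\cd_\phi$ (namely $\ch$ or $-\ch$, and $-\ch$ has the same VC dimension as $\ch$).

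The step that makes the lemma directly applicable is the distribution of $\phi$. Since $\lambda$ is a uniformly random bijection from $\leaf(T)$ to $[k]$ and the smaller subtree has a fixed set of $\mu k$ leaves, the image $A$ is a uniformly random $\mu k$-subset of $[k]$; equivalently $\phi$ is uniform among all functions with exactly $\mu k$ values equal to $-1$. This is precisely rule~(2) of \lemref{lemma:split_classes}, and the hypotheses on $\cd$ (namely $\prob_{(x,y)\sim\cd}(y=i)\le 10/k$) and on $k$ (namely $k\ge C(d+\ln(1/\delta))/\nu^2$) are inherited verbatim. Applying the lemma yields that with probability at least $1-\delta$ over $\lambda$ the approximation error with respect to $\cd_\phi$ is at least $\mu-\nu$, and combining this with the first paragraph gives the stated bound on the approximation error of $\ch_T$.

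I expect the only genuine subtlety to be bookkeeping rather than mathematical depth, since all the probabilistic content is packaged in \lemref{lemma:split_classes}. The two points requiring care are: first, verifying that a misrouting at the root truly forces a misclassification, which rests on $\lambda$ being a bijection so that the wrong subtree cannot contain $y$'s leaf; and second, handling the global-sign ambiguity cleanly by invoking the lemma for $\ch$ or $-\ch$ as appropriate, both of VC dimension $d$. Everything else is a clean reduction in which the randomness of the leaf labeling is matched exactly to rule~(2) of the lemma.
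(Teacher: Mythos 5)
Your proposal is correct and follows essentially the same route as the paper's (one-sentence) proof: bound $\Err^*_\cd(\ch_T)\ge\Err^*_{\cd_\phi}(\ch)$ via the observation that a misrouting at the root forces a misclassification, note that a uniformly random $\lambda$ induces a $\phi$ matching rule~(2) of Lemma~\ref{lemma:split_classes}, and apply the lemma. Your explicit treatment of the global-sign convention via $-\ch$ (which the paper elides by simply fixing $\phi(i)=1$ iff $\lambda^{-1}(i)$ lies in the right subtree) is a welcome bit of added care, not a deviation.
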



\begin{corollary}\label{cor:ECOC_approx_poorly}
  There is a constant $C>0$ for which the following holds. Let
  $(M,\lambda)$ be an ECOC where $M\in \reals^{k\times l}$, and assume that
  the bijection $\lambda: [k]\to [k]$ is chosen uniformly at random. Let $\ch\subseteq \{\pm
  1\}^\cx$ be a hypothesis class of VC-dimension $d$, let $\nu >0$,  and let $\cd$ be any
  distribution over $\cx\times [k]$ such that $\forall
  i\;\prob_{(x,y)\sim \cd}(y=i)\le\frac{10}{k}$. Then, for $k\ge C\cdot
  \left(\frac{dl\log(dl)+\ln(\frac{1}{\delta})}{\nu^2}\right)$, with
  probability of at least $1-\delta$ over the choice of $\lambda$, the
  approximation error of $\ch_M$ with respect to $\cd$ is at least
  $1/2-\nu$.
\end{corollary}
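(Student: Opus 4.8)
The plan is to strip off the random bijection $\lambda$ first, reduce the multiclass approximation error to a single \emph{balanced} binary problem, and then apply Lemma~\ref{lemma:split_classes} with $\mu$ essentially equal to $1/2$.

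First I would note that $\lambda$ merely permutes the output labels. Writing $\cg = \{x \mapsto \arg\max_{i\in[k]}\sum_{j=1}^l M_{ij}h_j(x) : (h_1,\dots,h_l)\in\ch^l\}$ for the ``decoding'' class, we have $\ch_M = \{\lambda\circ g : g\in\cg\}$, so the approximation error of $\ch_M$ with respect to $\cd$ equals $\min_{g\in\cg}\prob_{\cd}[g(x)\neq \lambda^{-1}(y)]$. Since $\lambda$ is a uniformly random bijection, so is $\pi := \lambda^{-1}$, and it suffices to lower bound $\min_{g\in\cg}\prob_{\cd}[g(x)\neq\pi(y)]$ with high probability over $\pi$.

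Next I would project onto a balanced binary split. Fix any $\beta:[k]\to\{\pm1\}$ with exactly $\lfloor k/2\rfloor$ entries equal to $-1$. Because $g(x)=\pi(y)$ implies $\beta(g(x))=\beta(\pi(y))$, for every $g$ we have $\prob[g(x)\neq\pi(y)]\ge\prob[\beta(g(x))\neq\beta(\pi(y))]$, hence the approximation error of $\ch_M$ is at least $\min_{b\in\cg_\beta}\prob[b(x)\neq\phi(y)]$, where $\cg_\beta := \{\beta\circ g : g\in\cg\}$ is a binary class and $\phi := \beta\circ\pi$. Two observations make this fit Lemma~\ref{lemma:split_classes}: (i) since $\beta$ is balanced and $\pi$ is a uniform bijection, $\phi$ is uniform among binary labelings of $[k]$ with $\lfloor k/2\rfloor$ minus-ones, i.e.\ exactly rule~(2) with $\mu=\lfloor k/2\rfloor/k\le 1/2$; and (ii) every $b\in\cg_\beta$ depends on $x$ only through $(h_1(x),\dots,h_l(x))\in\{\pm1\}^l$, so its growth function is bounded by $\Pi_\ch(n)^l\le (en/d)^{dl}$, which yields $\VC(\cg_\beta)\le O(dl\log(dl))$ --- the same estimate underlying the $d_G(\ch_M)$ bound of Theorem~\ref{thm:sample_ECOC}. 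I would then apply Lemma~\ref{lemma:split_classes} to $\cg_\beta$, the spread-out distribution $\cd$ (each label of mass at most $10/k$), the random balanced split $\phi$, and $\mu=\lfloor k/2\rfloor/k$: for $k\ge C(dl\log(dl)+\ln(1/\delta))/\nu^2$ this gives, with probability at least $1-\delta$ over $\phi$ (equivalently over $\lambda$), that $\min_{b\in\cg_\beta}\prob_{\cd_\phi}[b(x)\neq\phi(y)]\ge\mu-\nu$. Chaining this with the two displays above lower bounds the approximation error of $\ch_M$ by $\lfloor k/2\rfloor/k-\nu\ge 1/2-\nu$, after adjusting $C$ and $\nu$ to absorb the $O(1/k)$ rounding loss for odd $k$.

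The step requiring the most care is (ii): I must check that projecting the decoder through a balanced $\beta$ does not inflate the complexity beyond $O(dl\log(dl))$, so that the VC dimension fed into Lemma~\ref{lemma:split_classes} is exactly what makes the threshold $k\gtrsim dl\log(dl)/\nu^2$ in the statement suffice. (Treating $\cg$ as an abstract class would be a trap here, since the VC dimension of an arbitrary membership-based binary projection $\{x\mapsto\mathbf 1[g(x)\in A]\}$ need not be controlled by $d_G(\cg)$; it is the product structure $g=g_{h_1,\dots,h_l}$ that rescues the bound.) This is also precisely where the argument differs from the tree case of Corollary~\ref{cor:FT_approx_poorly}: there the split is dictated by the root of the tree and its imbalance $\mu=\min\{k_L,k_R\}/k$ propagates, whereas here we are free to choose $\beta$ balanced, and that freedom is exactly what pushes the lower bound up to $1/2$ rather than a generic $\mu$.
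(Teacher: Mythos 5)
Your proposal is correct and is essentially the paper's own proof: the paper likewise fixes a balanced projection $\phi$ (your $\beta$), observes that $\phi\circ\lambda$ is a uniformly random balanced labeling (rule (2) of Lemma~\ref{lemma:split_classes} with $\mu=\lfloor k/2\rfloor/k$), bounds $\VC(\phi\circ\ch_{(M,\mathrm{Id})})=O(dl\log(dl))$ via Lemma~\ref{lemma:meta_lemma} --- exactly your growth-function argument exploiting the product structure through the decoder --- and chains $\Err^*_{\cd}(\ch_{(M,\lambda^{-1})})\ge\Err^*_{\cd_{\phi\circ\lambda}}(\phi\circ\ch_{(M,\mathrm{Id})})$. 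Your explicit handling of the odd-$k$ rounding loss is a minor point the paper glosses over, but otherwise the two arguments coincide step for step.
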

Note that the first corollary holds even if only the top level of the binary
tree is balanced and splits the labels randomly to the left and the right sub-trees. The second corollary holds even if the code itself is not random (nor does it have to be binary), and only the association of rows with labels is random.
In particular, if the length of the code is $O(\log (k))$, as
suggested in \cite{AllweinScSi00a}, and the number of classes is
$\tilde{\Omega}(d)$, then the code is expected to perform poorly.

For an ECOC with a matrix of length $\Omega(k)$ and $d=o(k)$, we do not
have such a negative result as stated in Corollary
\ref{cor:ECOC_approx_poorly}. Nonetheless, \lemref{lemma:split_classes} implies
that the prediction of the binary classifiers when $d = o(k)$ is just slightly better than a
random guess, thus it seems to indicate that the ECOC method will still perform
poorly. Moreover, most current theoretical analyses of ECOC estimate the error
of the learned multiclass hypothesis in terms of the average error of the
binary classifiers. Alas, when the number of classes is large, Lemma
\ref{lemma:split_classes} shows that this average will be close to
$\frac{1}{2}$.

Finally, let us briefly discuss the tightness of Lemma
\ref{lemma:split_classes}. Let $x_1,\ldots,x_{d+1}\in\reals^d$ be affinely
independent and let $\cd$ be the distribution over $\reals^d\times [d+1]$
defined by $\prob_{(x,y)\sim \cd}((x,y)=(x_i,i))=\frac{1}{d+1}$. Is is not hard
to see that for every $\phi:[d+1]\to \{\pm 1\}$, the approximation error of the
class of halfspaces with respect to $\cd_\phi$ is zero. Thus, in order to
ensure a large approximation error \emph{for every distribution}, the number of
classes must be at least linear in the dimension, so in this sense, the lemma
is tight. Yet, this example is very simple, since each
class is concentrated on a single point and the points are linearly
independent. It is possible that in real-world distributions, a
large approximation error will be exhibited even when $k < d$.

We note that the phenomenon of a large approximation error, described in
Corollaries \ref{cor:FT_approx_poorly} and \ref{cor:ECOC_approx_poorly}, does
not reproduce in the classes $\cl,\ch_{OvA}$ and $\ch_{AP}$, since these
classes are symmetric.

\section{Proof Techniques}

Due to lack of space, the proofs for all the results stated above are
provided in the appendix. In this section we give a brief description of our main proof techniques.

Most of our proofs for the estimation error results, stated in Section \ref{sec:sampleTheorems},
are based on a similar method which we now describe. Let $L:\{\pm 1\}^l\to [k]$ be a multiclass-to-binary reduction (e.g., a tree), and for $\ch\subseteq \{\pm 1\}^{\cx}$, denote $L(\ch)=\{x\mapsto L(h_1(x),\ldots,h_l(x))\mid h_1,\ldots,h_l\in\ch\}$.
Our upper bounds for $d_G(L(\ch))$ are mostly based on the following
simple lemma.
\begin{lemma}\label{lemma:meta_lemma}
If $\VC(\ch)=d$ then $d_G(L(\ch))= O(ld\ln(ld))$.
\end{lemma}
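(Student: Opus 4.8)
The plan is to bound $d_G(L(\ch))$ by a growth-function argument: I would reduce the multiclass G-shattering condition to a counting statement, and then control that count by applying Sauer's lemma to the underlying binary class $\ch$.

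First I would unpack what G-shattering demands. Suppose $L(\ch)$ G-shatters a set $S=\{x_1,\ldots,x_m\}\subseteq\cx$, witnessed by some $f:S\to[k]$. By definition, for every $T\subseteq S$ there is a hypothesis $g_T\in L(\ch)$ agreeing with $f$ exactly on $T$ and disagreeing with $f$ on $S\setminus T$. Associating to each $g\in L(\ch)$ its \emph{correctness pattern} $(\mathbf{1}[g(x_i)=f(x_i)])_{i=1}^m\in\{0,1\}^m$, the witness $g_T$ realizes precisely the pattern $\mathbf{1}_T$. Hence $L(\ch)$ must realize all $2^m$ correctness patterns on $S$.

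The key step is to upper bound the number of realizable correctness patterns. Since each correctness pattern is a deterministic function of the induced multiclass labeling $(g(x_i))_{i=1}^m\in[k]^m$, it suffices to bound the number of distinct such labelings. Every $g\in L(\ch)$ has the form $x\mapsto L(h_1(x),\ldots,h_l(x))$ for some $(h_1,\ldots,h_l)\in\ch^l$, and the induced labeling depends only on the $l$-tuple of binary restrictions $\big((h_j(x_i))_{i=1}^m\big)_{j=1}^l$. Because $L$ is a fixed function, the number of distinct multiclass labelings is therefore at most $\Pi_\ch(m)^l$, where $\Pi_\ch(m)=\max_{|A|=m}\big|\{(h(x))_{x\in A}:h\in\ch\}\big|$ is the growth function of $\ch$. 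By Sauer's lemma and $\VC(\ch)=d$, we have $\Pi_\ch(m)\le(em/d)^d$ for $m\ge d$, so the number of realizable correctness patterns is at most $(em/d)^{dl}$.

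Combining the two bounds, G-shattering of a set of size $m$ forces $2^m\le(em/d)^{dl}$, that is, $m\le dl\log_2(em/d)$. The remaining step is the purely routine one of solving this inequality for $m$: a standard computation shows that any $m$ satisfying it obeys $m=O(dl\ln(dl))$, which bounds $d_G(L(\ch))$ as claimed. I expect no serious obstacle here; the only points requiring minor care are verifying that passing from correctness patterns to multiclass labelings to the product $\Pi_\ch(m)^l$ loses nothing, and that Sauer's bound is applied in the regime $m\ge d$ (the case $m<d$ being trivially within the claimed bound since $d\le O(dl\ln(dl))$).
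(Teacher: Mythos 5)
Your proposal is correct and takes essentially the same route as the paper's proof: both reduce G-shattering of a set $A$ to the counting bound $2^{|A|}\le |\ch|_A|^l\le |A|^{dl}$ via Sauer's lemma applied to the underlying binary class, and then solve for $|A|=O(dl\log(dl))$. The only difference is expository --- you spell out the passage from subsets to correctness patterns to multiclass labelings, which the paper compresses into one displayed inequality.
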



The technique for the lower bound on $d_N(L(\cw))$ when $\cw$ is the class of halfspaces in $\reals^d$ is more involved, and quite general. We consider a binary hypothesis class $\cg \subseteq \{\pm1\}^{[d]\times[l]}$ which consists of functions having an arbitrary behaviour over  $[d]\times \{i\}$, and a very uniform behaviour
on other inputs (such as mapping all other inputs to a constant). We show that $L(\cg)$ $N$-shatters the set $[d]\times[l]$. Since $\cg$ is quite simple, this is usually not very hard to show.
Finally, we show that the class of halfspaces is richer than $\cg$, in the sense that the inputs to $\cg$ can be mapped to points in $\reals^d$ such that the functions of $\cg$ can be mapped to halfspaces. We conclude that $d_N(L(\cw))\ge d_N(L(\cg))$.

To prove the approximation error lower bounds stated in Section \ref{sec:approxTheorems}, we use the techniques of VC theory in an unconventional way.
The idea of this proof is as follows:
Using a uniform convergence argument based on the VC dimension of the binary hypothesis class, we show that there exists a small labeled sample $S$ whose approximation error for the hypothesis class is close to the approximation error for the distribution, for all possible label mappings. This allows us to restrict our attention to a finite set of hypotheses, by their restriction to the sample. For these hypotheses,
we show that with high probability over the choice of label mapping, the approximation error on the sample is high. A union bound on the finite set of possible hypotheses shows that the approximation error on the distribution will be high, with high probability over the choice of the label mapping.

\section{Implications} \label{sec:discussion}

The first immediate implication of our results is that whenever the number of examples in the training set is $\tilde{\Omega}(dk)$, MSVM should be preferred to OvA and TC.  This is
certainly true if the hypothesis class of MSVM, $\cl$, has a zero
approximation error (the realizable case), since the ERM is then solvable with respect to $\cl$. Note that since
the inclusions given in \thmref{thm:Linear_contain_FT} are strict, there
are cases where the data is realizable with MSVM but not with
$\ch_{\OvA}$ or with respect to any tree.

In the non-realizable case, implementing the ERM is intractable for
all of these methods. Nonetheless, for each method there are reasonable heuristics to approximate the ERM, which should work well when the approximation
error is small. Therefore, we believe that MSVM should be the
method of choice in this case as well due to its lower approximation error. However, variations in the optimality of algorithms for different hypothesis classes should also be taken into account in this analysis. We leave this detailed analysis of specific training heuristics for future work.
Our analysis also implies that it is highly unrecommended to use TC
with a randomly selected $\lambda$ or ECOC with a random code whenever
$k > d$. Finally, when the number of examples is much larger than $dk^2$, the analysis implies that it is better to choose the AP approach.

To conclude this section, we illustrate the relative performance of MSVM, OvA, TC, and ECOC,
by considering the simplistic case where $d=2$, and each class is
concentrated on a single point in $\reals^2$.  In the leftmost graph
below, there are two classes in $\reals^2$, and the approximation
error of all algorithms is zero. In the middle graph, there are $9$
classes ordered on the unit circle of $\reals^2$. Here, both MSVM and
OvA have a zero approximation error, but the error of TC and of ECOC
with a random code will most likely be large. In the rightmost graph, we chose
random points in $\reals^2$. MSVM still has a zero approximation
error. However, OvA cannot learn the binary problem of distinguishing
between the middle point and the rest of the points and hence has a
larger approximation error.

{
\begin{tabular}{lccc}
\begin{minipage}{0.07\textwidth}
~
\end{minipage}
&
\begin{minipage}{0.27\textwidth}
\begin{center}
\begin{tikzpicture}[scale=0.5]
(1.3,1.3);
\fill (0,0.7) circle (0.1);
\fill (0.7,0) circle (0.1);
\draw[->] (-1.3,0)--(1.3,0);
\draw[->] (0,-1.3)--(0,1.3);
\end{tikzpicture}
\end{center}
\end{minipage}
&
\begin{minipage}{0.27\textwidth}
\begin{center}
\begin{tikzpicture}[scale=0.5]
\foreach \angle in {0,40,...,320}
 \fill (\angle:0.7) circle (0.1);
\draw[->] (-1.3,0)--(1.3,0);
\draw[->] (0,-1.3)--(0,1.3);
\end{tikzpicture}
\end{center}
\end{minipage}
&
\begin{minipage}{0.27\textwidth}
\begin{center}
\begin{tikzpicture}[scale=0.5]
 \fill (20:0.8) circle (0.1);
 \fill (50:1) circle (0.1);
 \fill (80:0.2) circle (0.1);
 \fill (100:1.1) circle (0.1);
 \fill (150:0.7) circle (0.1);
 \fill (200:0.7) circle (0.1);
 \fill (250:1) circle (0.1);
 \fill (300:0.9) circle (0.1);
\draw[->] (-1.3,0)--(1.3,0);
\draw[->] (0,-1.3)--(0,1.3);
\end{tikzpicture}
\end{center}
\end{minipage}
\\[0.3cm] & & &  \\ \hline
MSVM & \cmark & \cmark & \cmark \\
OvA & \cmark & \cmark & \xmark \\
TC/ECOC & \cmark & \xmark & \xmark
\end{tabular}
}

\bibliographystyle{plainnat}
\bibliography{bib}

\appendix

\section{Proofs}
\subsection{Notation and Definitions}
Throughout the proofs, we fix $d,k\ge 2$. We denote by $\cw=\cw^d=\{h_w:w\in \reals^{d+1}\}$ the class of linear separators (with bias) over $\reals^d$.
We assume the following "tie breaking" conventions:
\begin{itemize}
\item For $f:[k]\to \mathbb R$, $\text{argmax}_{i\in[k]}f(i)$ is the {\em minimal} number $i_0\in[k]$ for which $f(i_0)=\max_{i\in[k]}f(i)$;
\item $\sign(0)=1$.
\end{itemize}
Given a hypotheses class $\ch\subseteq \cy^\cx$, denote its restriction to $A\subseteq \cx$ by $\ch|_A=\{f|_A:f\in\ch\}$.
Let $\ch\subseteq \cy^\cx$ be a hypothesis class and let $\phi:\cy\to\cy'$, $\iota:\cx\to\cx'$ be functions. Denote $\phi\circ\ch=\{\phi\circ h:h\in\ch\}$ and $\ch\circ \iota =\{h\circ\iota:h\in\ch\}$.

Given $\ch\subseteq \cy^\cx$ and a distribution $\cd$ over $\cx\times\cy$, denote the approximation error by $\Err^*_\cd(\ch)=\inf_{h\in\ch}\Err_\cd(h)$. Recall that by definition \ref{def:contains}, $\ch$ essentially contains $\ch'\subseteq \cy^\cx$ if and only if $\Err^*_\cd(\ch)\le\Err^*_\cd(\ch')$ for every distribution $\cd$.
For a binary hypothesis class $\ch$, denote its VC dimension by $\VC(\ch)$.

Let $\ch\subseteq \cy^\cx$ be a hypothesis class and let $S\subseteq
\cx$. We say that $\ch$ \emph{G-shatters} $S$ if there exists an
$f:S\rightarrow \cy$ such that for every $T\subseteq S$ there is a
$g\in \ch$ such that
\[
\forall x\in T,\: g(x)=f(x),\text{ and \:}\forall x\in S\setminus T,\: g(x)\ne f(x).
\]
We say that $\ch$ \emph{N-shatters} $S$ if there exist $f_1,f_2: S \rightarrow \cy$ such that $\forall y\in S,\; f_1(y)\ne f_2(y)$, and for every $T\subseteq S$ there is a $g\in \ch$ such that
\[
\forall x\in T,\: g(x)=f_1(x),\text{ and \:}\forall x\in S\setminus T,\: g(x)=f_2(x).
\]
The \emph{graph dimension} of $\ch$, denoted $d_G(\ch)$, is the
maximal cardinality of a set that is G-shattered by $\ch$. The
\emph{Natarajan dimension} of $\ch$, denoted $d_N(\ch)$, is the
maximal cardinality of a set that is N-shattered by $\ch$.  Both of
these dimensions coincide with the VC-dimension for $|\cy|=2$. Note
also that we always have $d_N(\ch)\le d_G(\ch)$. As shown in
\cite{Ben-DavidCeHaLo95}, it also holds that $d_G(\ch)\le
4.67\log_2(|Y|)d_N(\ch)$.

\begin{proof}[Proof of Lemma \ref{lemma:meta_lemma}]
Let $A\subseteq \cx$ be a $G$-shattered set with $|A|=d_G(L(\ch))$. By Sauer's Lemma, $2^{|A|}\le |\ch|_{A}|^l\le |A|^{dl}$, thus $d_G(L(\ch))=|A|=O(ld\log(ld))$.
\end{proof}

\subsection{Multiclass SVM}\label{sec:SVM}
\begin{proof}[Proof of Theorem \ref{thm:sample_linear}]
The lower bound follows from Theorems \ref{thm:Linear_contain_FT} and \ref{thm:sample_FT}. To upper bound $d_G:=d_G(\cl)$, let $S = \{x_1,\ldots,x_{d_G}\} \subseteq \reals^d$ be a
set which is G-shattered by $\cl$, and let $f:S \rightarrow [k]$ be
a function that witnesses the shattering. For $x\in \reals^d$ and $j\in [k]$, denote
\[
\phi(x,j)=(0,\ldots 0,x[1],\ldots,x[d],1,0,\ldots,0)\in \reals^{(d+1)k},
\]
where $x[1]$ is in the $(d+1)(j-1)$ coordinate. For every $(i,j)\in [d_G]\times [k]$, define $z_{i,j} =\phi(x_i,f(x_i)) - \phi(x_i,j)$.
Denote $Z = \{z_{i,j} \mid (i,j)\in [d_G]\times [k]\}$. Since $\VC(\cw^{(d+1)k})=(d+1)k+1$, by Sauer's lemma,
\[
|\cw^{(d+1)k}|_Z|\le |Z|^{(d+1)k+1} = (d_Gk)^{(d+1)k+1}.
\]
We now show that
there is a one-to-one mapping from subsets of $S$ to $\cw^{(d+1)k}|_Z$,
thus concluding an upper bound on the size of $S$.
For any $T \subseteq S$, choose $W(T) \in \reals^{k\times (d+1)}(\reals)$ such that
\begin{align*}
\{x \in S \mid h_{W(T)}(x) = f(x)\} = T.
\end{align*}
Such a $W(T)$ exists because of the G-shattering of $S$ by $\cl$ using the witness $f$.
Define the vector $w(T)\in \reals^{k(d+1)}$ which is the concatenation of the rows of $W(T)$, that is  
$$w(T)=(W(T)_{(1,1)},\ldots,W(T)_{(1,d+1)},\ldots,W(T)_{(k,1)},\ldots,W(T)_{(k,d+1)})$$.
Now, suppose that $T_1\ne T_2$ for $T_1,T_2\subseteq S$. We now show that $w(T_1)|_Z \neq w(T_2)|_Z$.
Suppose w.l.o.g. that there is some $x_i\in T_1\setminus T_2$. Thus, $f(x_i)=h_{W(T_1)}(x_i)\ne h_{W(T_2)}(x_i)=:j$. It follows that the inner product of $x_i$ with row $f(x_i)$ of $W(T_1)$ is greater than the inner product of $x_i$ with row $j$ of $W(T_1)$, while for $W(T_2)$, the situation is reversed. Therefore,
$\sign(\langle w(T_1),z_{i,j}\rangle)\ne \sign(\langle w(T_2),z_{i,j}\rangle)$, so $w(T_1)$ and $w(T_2)$ induce different labelings of $Z$. It follows that the number of subsets of $S$ is bounded by the size of $\cw^{(d+1)k}|_Z$, thus $2^{d_G} \leq (kd_G)^{(d+1)k+1}$. We conclude that $d_G \leq O(dk\log(dk))$.
\end{proof}

\subsection{Simple classes that can be represented by the class of linear separators}\label{sec:reduction_classes}
In this section we define two fairly simple hypothesis classes, and show that the class of linear separators is richer than them. We will later use this observation to prove lower bounds on the Natarajan dimension of various multiclass hypothesis classes.

Let $l\ge 2$. For $f\in \{-1,1\}^{[d]},\;i\in[l],\;j\in \{-1,1\}$ define $f^{i,j}:[d]\times [l]\to\{-1,1\}$ by
\[
f^{i,j}(u,v)=\begin{cases}
f(u) & v=i\\
j & v\ne i,
\end{cases}
\]
And define the hypothesis class $\cf^l$ as
\[
\cf^{l}=\{f^{i,j}: f\in \{\pm 1\}^{[d]},\;i\in[l],\;j\in \{-1,1\}\}.
\]
For $g\in \{-1,1\}^{[d]},\;i\in[l],\;j\in \{\pm 1\}$ define $g^{i,j}:[d]\times [l]\to\{-1,1\}$ by
\[
g^{i,j}(u,v)=\begin{cases}
h(u) & v=i\\
 j & v> i\\
 -j & v< i,\end{cases}
\]
And define the hypothesis class $\cg^l$ as
\[
\cg^{l}=\{g^{i,j}: g\in \{-1,1\}^{[d]},\;i\in[l],\;j\in \{\pm 1\}\}.
\]

Let $\ch\subset\cy^\cx,\ch'\subset\cy^{\cx'}$ be two hypotheses classes. We say that $\ch$ is {\em richer} than $\ch '$ if there is a mapping $\iota:\cx'\to\cx$ such that $\ch'=\ch\circ \iota$.
It is clear that if $\ch$ is richer than $\ch'$ then $d_N(\ch')\le d_N(\ch)$ and $d_G(\ch')\le d_G(\ch)$. Thus, the notion of richness can be used to establish lower and upper bounds on the Natarajan and Graph dimension, respectively. The following lemma shows that $\cw$ is richer than $\cf^{l}$ and $\cg^{l}$ for every $l$. This will allow us to use the classes $\cf^{l},\;\cg^{l}$ instead of $\cw$ when bounding from below the dimension of an ECOC or TC hypothesis class
in which the binary classifiers are from $\cw$.
\begin{lemma}\label{lem:linear-sep-are-reach}
For any integer $l\geq 2$, $\cw$ is richer than $\cf^{l}$ and $\cg^{l}$.
\end{lemma}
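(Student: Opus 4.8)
The plan is to prove the lemma by exhibiting, for each of the two classes, an explicit embedding $\iota$ of the finite domain $[d]\times[l]$ into $\reals^d$ and then checking that every member of $\cf^{l}$ (resp. $\cg^{l}$) arises as $h_{w}\circ\iota$ for a suitable halfspace $h_{w}\in\cw$; this containment $\cf^{l},\cg^{l}\subseteq\cw\circ\iota$ is exactly what is needed to push the Natarajan and Graph dimension lower bounds from $\cf^{l},\cg^{l}$ up to $\cw$, and, through the reduction $L$, up to $L(\cw)$. In both cases I fix an affinely independent $(d-1)$-simplex and use its $d$ vertices, which halfspaces can shatter, to encode the coordinate $u\in[d]$; the coordinate $v\in[l]$ is encoded by how the copies of this simplex are placed.

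For $\cg^{l}$ I would take $\iota_{g}(u,v)=(t_{u},v)\in\reals^{d-1}\times\reals=\reals^{d}$, putting one copy of the simplex in the first $d-1$ coordinates and recording the column index in the last coordinate, so a halfspace reads $\sign(\langle w',t_{u}\rangle+cv+b)$. To realize $g^{i,j}$ I would first choose $w'$ and an offset $\beta$ with $\sign(\langle w',t_{u}\rangle+\beta)=g(u)$ on the active column $v=i$ (possible because the $t_{u}$ are shattered), and then set the column weight $c$ to have sign $j$ and magnitude exceeding $\max_{u}|\langle w',t_{u}\rangle+\beta|$. Writing the value at $(u,v)$ as the value at $(u,i)$ plus $c(v-i)$, this large $|c|$ forces sign $j$ on every $v>i$ and sign $-j$ on every $v<i$, which is precisely $g^{i,j}$; the one-dimensional column coordinate matches the threshold structure of $\cg^{l}$, so this case is routine.

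The substantive case is $\cf^{l}$, where the active column must be shattered while \emph{all} other columns, on both sides, are pinned to the single constant value $j$. The obstacle is that shattering a $(d-1)$-simplex requires the separating normal to range over all transverse directions, and a generic such normal will throw some far column onto the wrong side. I would resolve this with a scale-separated ``caps on a sphere'' embedding: fix distinct points $c_{1},\ldots,c_{l}$ on the sphere of radius $R$ in $\reals^{d}$ and set $\iota_{f}(u,v)=c_{v}+\epsilon\,t^{(v)}_{u}$, where the $t^{(v)}_{u}$ form a simplex spanning the tangent space at $c_{v}$ (so column $v$ lies in the flat through $c_{v}$ normal to the radial direction $a_{v}=c_{v}/R$) and $\epsilon$ is small. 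The key identity is $\langle a_{v},t^{(v)}_{u}\rangle=0$: the direction along which the cap can be peeled off is normal to that column's simplex flat. Hence to realize $f^{i,+1}$ I would pick any halfspace $(w_{0},\theta_{0})$ inducing the pattern $f$ on column $i$ and then shift by a large multiple of the radial direction, $w=w_{0}-s\,a_{i}$ and $\theta=\theta_{0}-sR$. Since $\langle a_{i},\iota_{f}(u,i)\rangle=R$ for every $u$, this shift changes the column-$i$ values all by the same $sR$ and so leaves the realized pattern $f$ exactly intact; for $v\neq i$ the increment is $s\bigl(R-\langle a_{i},c_{v}\rangle/R-\epsilon\langle a_{i},t^{(v)}_{u}\rangle\bigr)\ge s(\rho-\epsilon)$ with $\rho=R-\max_{v\neq i}\langle c_{i},c_{v}\rangle/R>0$, because distinct sphere points satisfy $\langle c_{i},c_{v}\rangle<R^{2}$. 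Thus for $s$ large and $\epsilon<\rho$ every point of every other column is driven to the positive side, giving the constant $+1$; the case $j=-1$ is symmetric, shifting by $+s\,a_{i}$.

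I expect the main obstacle to be precisely this simultaneous demand for $\cf^{l}$, namely shattering one column while holding every other column on one side, and the crux of the write-up will be verifying the two displayed estimates that make the radial shift (i) value-preserving on the active column and (ii) dominant on all others; checking that the required caps exist for every $d,l$ and that a single $\epsilon$ works is then routine. For both classes the argument delivers the inclusion $\cf^{l},\cg^{l}\subseteq\cw\circ\iota$, and this is exactly the direction invoked when using richness to transfer the dimension lower bounds, since it yields $d_{N}(\cf^{l})\le d_{N}(\cw)$, $d_{N}(\cg^{l})\le d_{N}(\cw)$ (and likewise for $d_{G}$), and, after composing with $L$, the bounds $d_{N}(L(\cw))\ge d_{N}(L(\cg^{l}))$ used in the sample-complexity theorems.
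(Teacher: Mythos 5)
Your proposal is correct and follows essentially the same route as the paper's proof: for $\cg^{l}$ you stack a fixed simplex along one coordinate and add a dominating linear term in the column index, and for $\cf^{l}$ you place simplices on the tangent flats of distinct sphere points so that the active column's radial functional is constant there (the paper uses the unit sphere and points ``very close to $e_i$'' where you use radius $R$ and an explicit $\epsilon$-scale), then add a large radial shift that preserves the active column's pattern and forces the constant sign $j$ elsewhere. You also correctly identify that only the containment $\cf^{l},\cg^{l}\subseteq\cw\circ\iota$ is needed for transferring the dimension lower bounds, which is exactly what the paper's own argument establishes.
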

\begin{proof} We shall first prove that $\cw$ is richer than $\cf^{l}$. Choose $l$ unit vectors $e_1,\ldots, e_l\in \mathbb R^d$. For every $i\in [l]$, choose $d$ affinely independent vectors such that
\[
x_{1,i},\ldots ,x_{d,i}\in \{x\in \reals^d:\langle x,e_i\rangle=1,\;\forall i'\ne i,\langle x,e_{i'}\rangle<1\}.
\]
This can be done by choosing $d$ affinely independent vectors in $\{x\in \reals^d:\langle x,e_i\rangle=1\}$ that are very close to $e_i$. Define $\iota(m,i)=x_{m,i}$.
Now fix $i \in [l]$ and $j \in \{-1,+1\}$, and let $f^{i,j}\in \cf^{l}$. We must show that $f^{i,j}=h\circ\iota$ for some $h\in\cw$. 
We will show that there exists an affine map $\Lambda:\reals^d\to\reals$ for which $f^{i,j}=\sign\circ \Lambda \circ \iota$. This suffices, since $\cw$ is exactly the set of all functions of the form $\sign\circ \Lambda $ where $\Lambda$ is an affine map.
Define \mbox{$M=\{x\in \reals^d:\langle x,e_i\rangle=1\}$},
and let $A:M\to \reals$ be the affine map defined by
\[
\forall m\in [d],\;A(x_{m,i})=f(m,i).
\] Let $P:\reals^d\to M$ be the orthogonal projection of $\reals^d$ on $M$. For $\alpha\in\reals$, define an affine map $\Lambda_{\alpha}:\reals^d\to\reals$ by
\[
\Lambda_{\alpha}(x)=A(P(x))+\alpha\cdot \langle x-e_i,e_i\rangle.
\]
Note that, $\forall m\in [d],\;\Lambda_{\alpha}(x_{m,i})=f(m,i)$. Moreover, for every $i'\ne i$ and $m\in[d]$ we have $\langle x_{m,i'}-e_i,e_i\rangle<0$. Thus, by choosing $|\alpha|$ sufficiently large and choosing $\sign(\alpha)$ depending on $j$, we can make sure that $f^{i,j}=\sign\circ \Lambda_{\alpha}\circ \iota$.

The proof that $\cw$ is richer than $\cg^{l}$ is similar and simpler. Let $e_1,\ldots ,e_d\in\mathbb R^{d-1}$ be affinely independent. Define
\[
\iota(m,i)=(e_m,i)\in\reals^{d-1}\times\reals \cong \reals^d,
\]
Given $g^{i,j}\in \cg^{d,l}$, let $A:\reals^{d-1}\times\{i\}\to \reals$ be the affine map defined by $A(e_m,i)=g^{i,j}(m,i)$ and let $P:\reals^d\to\reals^{d-1}\times\{i\}$ be the orthogonal projection. Define $\Lambda:\reals^d\to\reals$ by
\[
\Lambda (x,y)=A(P(x,y))+j\cdot 10\cdot (y-i).
\]
It is easy to check that $\sign\circ\Lambda\circ\iota=g^{i,j}$.
\end{proof}

\begin{note}
From Lemma \ref{lem:linear-sep-are-reach} it follows that $\VC(\cf^{l}),\VC(\cg^{l})\le d+1$. On the other hand, both $\cf^{l}$ and $\cg^{l}$ shatter $([d]\times \{1\})\cup \{(1,2)\}$. Thus, $\VC(\cf^{l})=\VC(\cg^{l})=d+1$
\end{note}

\subsection{Trees}\label{sec:FT}
\begin{proof}[Proof of Theorem \ref{thm:sample_FT}] We first prove the upper bound. Let $A\subseteq \cx$ be a $G$-shattered set with $|A|=d_G(\ch_{\trees})$. By Sauer's Lemma, and since the number of trees is bounded by $k^k$, we have $$2^{|A|}\le k^k\cdot|\ch|_{A}|^k\le k^k\cdot|A|^{dk},$$thus $d_G(\ch_{\trees})=|A|=O(dk\log(dk))$.

To prove the lower bound, by Lemma \ref{lem:linear-sep-are-reach}, it is enough to show that $d_N(\cg^l_T)\ge d\cdot (k-1)$ for some $l$. We will take $l=|N(T)|=k-1$. Linearly order $N(T)$ such that for every node $v$, the nodes in the left sub-tree emanating from $v$ are smaller than the nodes in the corresponding right sub-tree. We will identify $[l]$ with $N(T)$ by an order-preserving map, thus $\cg^{l}\subset \{-1,1\}^{[d]\times N(T)}$. We also identify the labels with the leaves.

Define $g_1:[d]\times N(T)\to \operatorname{leaf}(T)$ by setting $g_1(i,v)$ to be the leaf obtained by starting from the node $v$, going right once and then going left until reaching a leaf. Similarly, define $g_2:[d]\times N(T)\to leaf(T)$ by setting $g_2(i,v)$ to be the leaf obtained by starting from the node $v$, going left once and then going right until reaching a leaf.

We shall show that $g_1,g_2$ witness the $N$-shattering of $[d]\times N(T)$ by $\cg^{l}_T$. Given $S\subset [d]\times N(T)$ define $C:N(T)\to\cg^{l}$ by
$$C(v)(i,u)=\begin{cases}
-1 & u<v\\
1 & u>v\\
1& u= v,\;(i,u)\in S\\
-1& u= v,\;(i,u)\notin S.\\
\end{cases}$$
It is not hard to check that $\forall (i,u)\in S,\;h_C(i,u)=g_1(i,u)$,
and $\forall (i,u)\notin S,\;h_C(i,u)=g_2(i,u)$.
\end{proof}
\begin{note}
Define $\tilde{\cg}^{l}=\{g^{i,1}: g\in \{-1,1\}^{[d]},\;i\in[l]\}.$
The proof shows that \mbox{$d_N(\tilde{\cg}^{l}_T)\ge d\cdot (k-1)$}. Since $\VC(\tilde{\cg}^{l})=d$, we obtain a simpler proof of Theorem 23 from \cite{DanielySaBeSh11}, which states that for every tree $T$ there exists a class $\ch$ of VC dimension $d$ for which $d_N(\ch_T)\ge d(k-1)$.
\end{note}

\subsection{ECOC, One vs. All and All Pairs}\label{sec:ECOC}
To prove the results for ECOC and its special cases,
we first prove a more general theorem, based on the notion of a sensitive vector for a given code. Fix a code $M\in \reals^{k\times l}(\reals)$. We say that a binary vector $u\in \{\pm1\}^l$ is {\em q-sensitive} for $M$ if there are $q$ indices $j\in [l]$ for which $\tilde M(u)\ne \tilde M(u\oplus e_j)$. Here, $u\oplus e_j:=(u[1],\ldots,-u[j],\ldots,u[l])$.

\begin{theorem}\label{thm:sensitive_vec}
If there exists a $q$-sensitive vector for a code $M\in \reals^{k\times l}(\reals)$ then $d_N(\cw_M)\ge d \cdot q$.
\end{theorem}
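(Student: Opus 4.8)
The plan is to prove the bound by reducing to the much simpler binary class $\cf^l$ and exploiting the richness of halfspaces recorded in Lemma~\ref{lem:linear-sep-are-reach}. First I would show that richness is preserved by the ECOC reduction: by that lemma there is a map $\iota:[d]\times[l]\to\reals^d$ with $\cf^l=\cw\circ\iota$, so writing each $\psi_r\in\cf^l$ as $h_r\circ\iota$ for some $h_r\in\cw$ gives $\tilde M(\psi_1(p),\ldots,\psi_l(p))=\big[\tilde M(h_1,\ldots,h_l)\big](\iota(p))$ for every $p\in[d]\times[l]$. Hence $\cf^l_M=\cw_M\circ\iota$, so $\cw_M$ is richer than $\cf^l_M$ and $d_N(\cf^l_M)\le d_N(\cw_M)$. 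It therefore suffices to prove $d_N(\cf^l_M)\ge d\cdot q$, i.e.\ that $\cf^l_M$ $N$-shatters a set of size $dq$.

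Let $u\in\{\pm1\}^l$ be the $q$-sensitive vector and let $j_1,\ldots,j_q\in[l]$ be $q$ coordinates with $\tilde M(u)\ne\tilde M(u\oplus e_{j_r})$. I would $N$-shatter $S=[d]\times\{j_1,\ldots,j_q\}$, which has size $dq$, using the witnesses $f_1(m,j_r)=\tilde M(u\oplus e_{j_r})$ and $f_2(m,j_r)=\tilde M(u)$; these differ at every point of $S$ precisely by sensitivity, as $N$-shattering requires. The heart of the argument is, for each $T\subseteq S$, to choose the $l$ binary classifiers in $\cf^l$ so that the code-word produced at a point $(m,j_r)\in S$ is exactly $u$ when $(m,j_r)\notin T$ and exactly $u\oplus e_{j_r}$ when $(m,j_r)\in T$. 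I would do this as follows: for a column $c\notin\{j_1,\ldots,j_q\}$ take the constant classifier $\psi_c\equiv u[c]$ (a member of $\cf^l$); for the sensitive column $j_r$ take the $\cf^l$-function whose active slice is $j_r$, whose off-slice constant is $u[j_r]$, and whose on-slice value at $(m,j_r)$ equals $-u[j_r]$ iff $(m,j_r)\in T$. A short verification then shows that at any $(m,j_r)\in S$ every column $c\ne j_r$ outputs $u[c]$ (because it is either constant or evaluated off its active slice), while column $j_r$ flips the $j_r$-th bit exactly on $T$; applying $\tilde M$ gives $f_1$ on $T$ and $f_2$ on $S\setminus T$, establishing the shattering and hence the theorem.

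The main obstacle is this middle construction: producing, uniformly over all $2^{dq}$ subsets $T$, tuples of classifiers that realize each prescribed \emph{single}-coordinate flip of $u$ while leaving all other coordinates pinned to $u$. This is exactly what $\cf^l$ is designed to enable, since a function of $\cf^l$ may behave arbitrarily on one slice of $[d]\times[l]$ and constantly elsewhere, letting us isolate the $j_r$-th code coordinate on the $j_r$-th slice without disturbing the others. The delicate point to check carefully is that the off-slice and constant columns all reproduce $u$ so that \emph{only} the intended coordinate changes; once that is confirmed the reduction from $\cf^l_M$ back to $\cw_M$ is immediate, as richness can only increase the Natarajan dimension.
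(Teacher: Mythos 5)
Your proposal is correct and takes essentially the same approach as the paper's proof: reduce via Lemma~\ref{lem:linear-sep-are-reach} to showing $d_N(\cf^l_M)\ge d\cdot q$, then $N$-shatter $[d]\times\{j_1,\ldots,j_q\}$ with witnesses $\tilde M(u)$ and $\tilde M(u\oplus e_{j_r})$, using constant classifiers on the non-sensitive columns and slice-active members of $\cf^l$ on the sensitive ones. The only differences are cosmetic --- you keep general sensitive indices instead of assuming w.l.o.g.\ that they are $1,\ldots,q$, and you attach the flipped codeword to $T$ rather than to its complement --- neither of which affects the $N$-shattering argument.
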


\begin{proof}
By Lemma \ref{lem:linear-sep-are-reach}, it suffices to show that
$d_N(\cf^{l}_M)\ge d \cdot q$.
Let $u\in\{\pm1\}^l$ be a $q$-sensitive vector. Assume w.l.o.g. that the sensitive coordinates are $1,\ldots,q$. We shall show that $[d]\times [q]$ is $N$-shattered by $\cf^{l}_M$. Define $g_1,g_2:[d]\times [q]\to [k]$ by
$$g_1(x,y)=\tilde M(u),\;g_2(x,y)=\tilde M(u\oplus e_y)$$

Let $T\subset [d]\times[q]$. Define $h_1,\ldots,h_l\in \cf^l$ as follows. For every $j>q$, define $h_j\equiv u[j]$. For $j\le q$ define
\[
h_j(x,y)=\begin{cases}
u[j] & y\neq j\\
u[j]& y= j,\;(x,y)\in T\\
-u[j]& y= j,\;(x,y)\in [d]\times[q]\setminus T.\\
\end{cases}
\]
For $h=(h_1,\ldots,h_l)$, it is not hard to check that
\begin{align*}
\forall (x,y)\in T,\quad&\tilde{M}(h_1(x,y),\ldots,h_l(x,y))=g_1(x,y),\text{ and }\\
\forall (x,y)\in [d]\times[q]\setminus T,\quad&\tilde{M}(h_1(x,y),\ldots,h_l(x,y))=g_2(x,y).
\end{align*}
\end{proof}

The following lemma shows that a code with a large distance is also highly sensitive. In fact, we prove a stronger claim: the sensitivity is actually
at least as large as the distance between any row and the row closest to it in Hamming distance. Formally, we consider $\Delta(M)=\max_{i}\min_{j\ne i}\hamming(M[i],M[j]) \ge \delta(M)$.
\begin{lemma}\label{lemma:sensitive_vec}
For any code $M\in \reals^{k\times l}(\pm 1)$, there is a $q$-sensitive vector for $M$, where \mbox{$q\ge \frac{1}{2}\Delta(M) \geq \frac12 \delta(M)$}.
\end{lemma}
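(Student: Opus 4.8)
The plan is to exhibit an explicit sensitive vector built around the row that is most isolated in Hamming distance. Let $i^*$ be a row achieving the outer maximum in $\Delta(M)=\max_i\min_{j\ne i}\hamming(M[i],M[j])$, so that every other row lies at Hamming distance at least $\Delta:=\Delta(M)$ from $M[i^*]$, and let $i'$ be a row attaining that minimum, i.e. with $\hamming(M[i^*],M[i'])=\Delta$. Let $D\subseteq[l]$ be the set of $\Delta$ coordinates on which $M[i^*]$ and $M[i']$ differ. I would restrict attention to the sub-cube of vectors $u$ that agree with $M[i^*]$ (equivalently $M[i']$) on every coordinate outside $D$, and parametrize such a $u$ by the number $a$ of coordinates of $D$ on which it agrees with $M[i^*]$. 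Then $\hamming(u,M[i^*])=\Delta-a$ and $\hamming(u,M[i'])=a$, and flipping one coordinate of $D$ changes $a$ by exactly $\pm1$.

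The key structural step is a domination claim that reduces the whole problem to a comparison between the two rows $i^*$ and $i'$. For any row $i$ and any $u$ in the sub-cube, the triangle inequality gives $\hamming(u,M[i])\ge\hamming(M[i^*],M[i])-\hamming(M[i^*],u)\ge\Delta-(\Delta-a)=a=\hamming(u,M[i'])$, so $\langle M[i],u\rangle\le\langle M[i'],u\rangle$ for every $i\ne i^*$. Thus on this sub-cube the only possible winners of $\arg\max_i\langle M[i],u\rangle$ are $i^*$ and $i'$: since $\langle M[i^*],u\rangle-\langle M[i'],u\rangle=2(2a-\Delta)$, the predictor $\tilde M(u)$ equals $i^*$ as the strict, unique maximizer exactly when $a>\Delta/2$, and is different from $i^*$ when $a<\Delta/2$. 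Everything therefore hinges on walking across the threshold $a=\Delta/2$ and counting coordinate flips that toggle the indicator $[\tilde M(u)=i^*]$.

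For the counting I would place $u$ right at the crossing. When $\Delta$ is odd I take $a=(\Delta+1)/2$, where $i^*$ is the strict winner; each of the $a$ coordinates of $D$ on which $u$ agrees with $M[i^*]$, when flipped, sends $a$ to $(\Delta-1)/2<\Delta/2$ and hence makes $i^*$ lose, so all $\lceil\Delta/2\rceil$ of them are sensitive. The main obstacle is the even case, where the crossing falls exactly on the tie value $a=\Delta/2$ and the fixed min-index tie-breaking convention may keep $i^*$ as $\tilde M(u)$. I would dispose of this by a dichotomy on the tie-winner $w:=\tilde M(u)$ at $a=\Delta/2$: if $w=i^*$, then flipping any of the $\Delta/2$ agreement coordinates lowers $a$ to $\Delta/2-1$, so $\tilde M$ becomes different from $i^*=w$; if $w\ne i^*$, then flipping any of the $\Delta/2$ disagreement coordinates raises $a$ to $\Delta/2+1$, making $i^*$ the strict, unique winner and hence different from $w$. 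Either way $u$ is $q$-sensitive with $q\ge\Delta/2=\tfrac12\Delta(M)$, and combined with the odd case this yields the claimed bound; the inequality $\Delta(M)\ge\delta(M)$ is immediate from the definitions.
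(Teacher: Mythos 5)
Your proof is correct, and at its core it is the same construction as the paper's: take a row $i^*$ attaining $\Delta(M)$ and its nearest row $i'$, place the candidate vector $u$ at the Hamming midpoint of the two rows on their difference set $D$, use the triangle inequality together with $\hamming(M[i^*],M[i])\ge\Delta(M)$ for all $i\ne i^*$ to eliminate every other row, and count the roughly $\Delta(M)/2$ flips inside $D$ that carry $u$ across the decision boundary. Where you genuinely diverge is in the treatment of ties. The paper fixes the split at $\lceil\Delta/2\rceil$ coordinates agreeing with $i_1$, splits into the cases $i_1<i_2$ and $i_1>i_2$ to exploit the minimal-index argmax convention, and asserts the exact identities $\tilde M(u)=i_1$ and $\tilde M(u\oplus e_j)=i_2$. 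Those identities are actually stronger than what holds in general: a third row $i_0$ with $\hamming(M[i^*],M[i_0])=\Delta$ can, by tightness of the triangle inequality, sit at the same minimal distance from $u$ (or from $u\oplus e_j$) with a smaller index, in which case the argmax returns $i_0$ rather than $i_1$ (or $i_2$); the lemma survives because the \emph{winner still changes}, which is all sensitivity requires. Your formulation proves exactly that weaker fact and nothing more: the monotone parametrization by $a$, the domination claim $\langle M[i],u\rangle\le\langle M[i'],u\rangle$ for $i\ne i^*$ on the whole sub-cube, and, in the even-$\Delta$ case, the dichotomy on the tie winner $w$ (flip agreement coordinates if $w=i^*$, disagreement coordinates if $w\ne i^*$) never need to identify who wins a tie, so the argument is independent of the tie-breaking convention and closes the small gaps the paper papers over with ``it is not hard to check.'' The only cost is a marginally weaker count in the even case ($\Delta/2$ sensitive coordinates rather than a claimed $\lceil\Delta/2\rceil$, which coincide anyway when $\Delta$ is even), so the bound $q\ge\frac12\Delta(M)\ge\frac12\delta(M)$ follows in full.
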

\begin{proof}
Let $i_1$ the row in $M$ such that its hamming distance to the row closest to it is $\Delta(M)$. Denote by $i_2$ the index of the closest row (if there is more than one such row, choose one of them arbitrarily). We have $\hamming(M[i_1],M[i_2]) = \Delta(M)$. In addition, \mbox{$\forall i\neq i_1,i_2, \hamming(M[i_1],M[i]) \geq \Delta(M)$}.
Assume w.l.o.g. that the indices in which rows $i_1$ and $i_2$ differ are $1,\ldots,\Delta(M)$. Consider first the case that $i_1<i_2$. Define $u\in\{\pm 1\}^{[l]}$ by
$$u[j]=\begin{cases}
M_{(i_1,j)} & j\le \lceil \frac{\Delta}{2}\rceil\\
M_{(i_2,j)} & \text{otherwise}.
\end{cases}$$
Is is not hard to check that for every $1\le j\le \lceil \frac{\Delta}{2}\rceil$, $i_1=\tilde M(u)$ and $\tilde M(u\oplus e_j)=i_2$, thus $u$ is $\lceil \frac{\Delta}{2}\rceil$-sensitive. If $i_1>i_2$, the proof is similar except that $u$ is defined as $$u[j]=\begin{cases}
M_{(i_2,j)} & j\le \lceil \frac{\Delta}{2}\rceil\\
M_{(i_1,j)} & \text{otherwise}.
\end{cases}$$
\end{proof}

\begin{proof}[Proof of Theorem \ref{thm:sample_ECOC}]
The upper bound follows from Lemma \ref{lemma:meta_lemma}.
The lower bound follows form Theorem \ref{thm:sensitive_vec} and Lemma \ref{lemma:sensitive_vec}.
\end{proof}

\begin{proof}[Proof of Theorem \ref{thm:sample_OvA}] The upper bounds follow from Theorem \ref{thm:sample_ECOC}. To show that $d_N(\cw_{\OvA})\ge (k-1)d$, we note that the all-negative vector $u = (-1,\ldots,-1)$ of length $k$ is $(k-1)$-sensitive for the code $M^{\OvA}$, and apply Theorem \ref{thm:sensitive_vec}.

To show that $d_N(\cw_{\AP})\ge d\binom {k-1}2$, assume for simplicity that $k$ is odd (a similar analysis can be given when $k$ is even). Define $u\in \{\pm 1\}^{\binom k2}$ by
$$\forall i<j,\;u[i,j]=\begin{cases}
1 & j-i\le  \frac{k-1}{2}\\
-1 & \text{otherwise}.
\end{cases}$$

For every $n\in [k]$, we have $\sum_{1\le i<j\le k}u[i,j]\cdot M^{\AP}_{n,(i,j)}=0$, as the summation counts the number of pairs $(i,j)$ such that $n\in \{i,j\}$ and $M^{\AP}_{n,(i,j)}$ agrees with $u[i,j]$. Thus, $\tilde M^{\AP}(u)=1$, by our tie-breaking assumptions. Moreover, it follows that for every \mbox{$1<i<j\le k$}, we have  $\tilde M^{\AP}(u\oplus e_{(i,j)})\in \{i,j\}$, since flipping entry $[i,j]$ of $u$ increases $(M^{AP}u)_j$ or $(M^{AP}u)_i$ by $1$ and does not increase the rest of the coordinates of the vector $M^{AP}u$. This shows that $u$ is $\binom {k-1}2$-sensitive.
\end{proof}

\subsection{Approximation}\label{sec:approx}
\begin{proof}[Proof of Theorem \ref{thm:Linear_contain_FT}]
We first show that for any tree for $k$ classes $T$, $\cl$ essentially contains $\cw_T$. It follows that $\cl$ essentially contains $\cw_{\trees}$ as well.  Let $\cd$ a distribution over $\reals^d$, let $C:N(T)\to \cw$ be a mapping associating nodes in $T$ to binary classifiers in $\cw$, and let $\epsilon>0$. We will show that there exists a matrix $W\in \reals^{k\times (d+1)}$ such that $\Pr_{x\sim \cd}[h_W(x)\ne h_C(x)]<\epsilon$.

For every $v\in N(T)$, denote by $w(v)\in \reals^{d+1}$ the linear separator such that $C(v)=h_{w(v)}$. For every $w\in\reals^{d+1}$ define $\tilde w=w+(0,\ldots,0,\gamma)$. Recall that for $x \in \reals^d$, $\bar{x} \in \reals^{d+1}$ is simply the concatenation $(x,1)$.
Choose $r>0$ large enough so that $\Pr_{x\sim \cd}[||\bar x||> r]<\epsilon/2$ and $\forall v\in N(T),\;||\tilde w(v)||<r$. Choose $\gamma>0$ small enough so that \[
\Pr_{x\sim\cd}\left[ \exists v\in N(T), \langle \tilde w(v),\bar x \rangle \in (-\gamma,\gamma)  \right]=\Pr_{x\sim\cd}\left[ \exists v\in N(T), \langle w(v),\bar x \rangle \in (-2\gamma,0)  \right]<\epsilon/2.
\]
Let $a=2r^2/\gamma+1$.
For $i\in [k]$, let $v_{i,1},\ldots,v_{i,{m_i}}$ be the path from the root to the leaf associated with label $i$. For each $1\le j<m_i$ define $b_{i,j}=1$ if $v_{i,j+1}$ is the right son of $v_{i,j}$, and $b_{i,j}=-1$ otherwise.
 Now, define $W\in \reals^{k\times (d+1)}$ to be the matrix whose $i$'th row is $w_i=\sum_{j=1}^{m_i-1}a^{-j}\cdot b_{i,j}\tilde w(v_{i,j})$.

To prove that $\Pr_{x\sim \cd}[h_W(x)\ne h_C(x)]<\epsilon$, it suffices to show that $h_W(x)=h_C(x)$ for every $x\in\reals^d$ satisfying $||\bar x||<r$ and $\forall v \in N(T), \langle \tilde w(v),\bar x \rangle\notin (-\gamma,\gamma)$, since the probability mass of the rest of the vectors is less than $\epsilon$. Let $x\in\reals^d$ be a vector that satisfies these assumptions. Denote $i_1=h_C(x)$. It suffices to show that for all $i_2\in [k]\setminus \{i_1\},\,\langle w_{i_1}, \bar x\rangle>\langle w_{i_2}, \bar x\rangle$,
since this would imply that $h_W(x) = i_1$ as well.

Indeed, fix $i_2 \neq i_1$, and let $j_0$ be the length of the joint prefix of the two root-to-leaf paths that match the labels $i_1$ and $i_2$.
In other words, $\forall j\le j_0,\;v_{i_1,j}=v_{i_2,j}$ and $v_{i_1,j_0+1}\ne v_{i_2,j_0+1}$. Note that
\[
\langle\bar x, (b_{i_1,j_0}-b_{i_2,j_0})\tilde w(v_{i_1,j_0})\rangle=\langle\bar x, 2b_{i_1,j_0}\tilde w(v_{i_1,j_0})\rangle=2|\langle\bar x, \tilde w(v_{i_1,j_0})\rangle |\ge 2\gamma.
\]
The last equality holds because $b_{i_1,j_0}$ and $\langle \bar x, w(v_{i_1,j_0}) \rangle$ have the same sign by definition of $b_{i,j}$.
We have
\begin{align*}
&\langle w_{i_1}, \bar x\rangle-\langle w_{i_2}, \bar x\rangle = \langle \bar x, \sum_{j=1}^{m_{i_1}-1}a^{-j} b_{i_1,j} \tilde w(v_{i_1,j})-\sum_{j=1}^{m_{i_2}-1}a^{-j} b_{i_2,j}\tilde w(v_{i_2,j})\rangle
\\
&\qquad= \langle \bar x,a^{-{j_0}}(b_{i_1,j_0}-b_{i_2,j_0})\tilde{w}(v_{i_1,j_0})\rangle+ \langle \bar x, \sum_{j=j_0+1}^{m_{i_1}-1}a^{-j} b_{i_1,j} \tilde{w}(v_{i_1,j})-\sum_{j=j_0+1}^{m_{i_2}-1}a^{-j} b_{i_2,j}\tilde w(v_{i_2,j})\rangle
\\
&\qquad\ge \langle \bar x,a^{-{j_0}}(b_{i_1,j_0}-b_{i_2,j_0})\tilde{w}(v_{i_1,j_0})\rangle-\sum_{j=j_0+1}^\infty a^{-j}2r^2
\\
&\qquad\ge  2a^{-j_0}\left(\gamma-\frac{r^2}{a-1}\right)>0.
\end{align*}
Since this holds for all $i_2 \neq i_1$, it follows that $h_W(x) = i_1$. Thus, we have proved that $\cl$ essentially contains $W_{\trees}$.

Next, we show that $\cl$ strictly contains $\cw_\trees$, by showing a distribution over labeled examples such that the approximation error using $\cl$ is strictly smaller than the approximation error using $\cw_\trees$.
Assume w.l.o.g.~that $d=2$ and $k=3$: even if they are larger we can always restrict the support of the distribution to a subspace of dimension $2$ and to only three of the labels.
Consider the distribution $\cd$ over $\reals^2\times [3]$ such that its marginal over $\reals^2$ is uniform
in the unit circle, and $\Pr_{(X,Y) \sim D}[Y = i \mid X = x] = \mathbb{I}[x \in D_i]$,
where $D_1,D_2,D_3$ be subsets sectors of equal angle of the unit circle (see Figure \ref{fig:dist}):

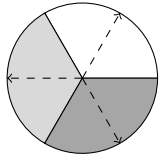
\begin{figure}[ht]
\centering
\begin{tikzpicture}
\draw[fill=white] (0,0)-- (0:1cm)
arc (0:120:1cm) -- cycle;
\draw[fill=gray!30] (0,0)-- (120:1cm)
arc (120:240:1cm) -- cycle;
\draw[fill=gray!70] (0,0)-- (240:1cm)
arc (240:360:1cm) -- cycle;
\draw[dashed,->] (0,0)--(0.5,0.866);
\draw[dashed,->] (0,0)--(0.5,-0.866);
\draw[dashed,->] (0,0)--(-1,0);
\end{tikzpicture}
\caption{Illustration for the proof of Theorem \ref{thm:Linear_contain_FT}}
\label{fig:dist}
\end{figure}

Clearly, by taking the rows of $W$ to point to the middle of each sector (dashed arrows in the illustration), we  get $\Err^*_\cd(\cl)=0$. In contrast, no linear separator can split the three labels into two gropus without error, thus $\Err^*_\cd(\cw_\trees)>0$.

Finally, to see that $\cl$ essentially contains $\cw_{\OvA}$, we note that $\cw_{\OvA}=\cw_T$ where $T$ is a tree such that each of its internal nodes has a leaf corresponding to one of the labels as its left son. Thus $\cw_\OvA$ is essentially contained in $\cw_\trees$.
\end{proof}

\begin{proof}[Proof of Theorem \ref{thm:AP_contain_Linear}]
It is easily seen that $\cw_{AP}$ contains $\cl$: Let $W \in \reals^{d+1\times k}$, and denote its i'th row by $W[i]$. For each column $(i,j)$ of $M^{\AP}$, define the binary classifier $h_{i,j} \in \cw$ such that $\forall x \in \reals^d, h_{i,j}(\bar{x}) = \sign(\dotprod{W[j] - W[i], \bar{x}})$.
Then for all $x$, $h_W(x) = \tilde{M^{\AP}}(h_{1,1}(x),\ldots,h_{k-1,k}(x))$.

To show that the inclusion is strict, as in the proof of Theorem \ref{thm:Linear_contain_FT}, we can and will assume that $d=2$. Choose $k^*$ to be the minimal number such that for every $k\ge k^*$, $d_N(\cw_{AP})>d_N(\cl)$: This number exists by Theorems \ref{thm:sample_OvA} and \ref{thm:sample_linear} (note that though we chose $k^*$ w.r.t. $d=2$, the same $k^*$ is valid for every $d$). For any $k\ge k^*$, it follows that there is a set $S\subseteq \reals^2$ that is $N$-shattered by $\cw_{AP}$ but not by $\cl$. Thus, there is a hypothesis $h\in \cw_{AP}$ such that for every $g\in \cl$, $g|_S\ne h|_S$. Define the distribution $\cd$ to be uniform over $\{(x,h(x)):x\in S\}$. Then clearly $\Err^*_\cd(\cl)>\Err^*_\cd(\cw_{AP}) = 0$.
\end{proof}

Next, we prove Theorem \ref{thm:embedding}, which we restate more formally as follows. Note that the result on OvA is implied since there exists a tree that implements OvA.
\begin{theorem}(Restatement of Theorem \ref{thm:embedding})
If there exists an embedding $\iota:\reals^d \rightarrow \reals^{d'}$ and a tree $T$ such that $\cw^{d'}_T\circ \iota$
essentially contains $\cl$, then necessarily $d' \geq \tilde \Omega(dk)$.
\end{theorem}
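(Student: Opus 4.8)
The plan is to prove the contrapositive by a dimension-counting argument that couples the structural constraints of a tree-based classifier with the sample-complexity bounds already established in the paper. Suppose we are given an embedding $\iota:\reals^d\to\reals^{d'}$ and a tree $T$ for $k$ classes such that $\cw^{d'}_T\circ\iota$ essentially contains $\cl$. The key observation is that essential containment is a statement about approximation error on arbitrary distributions, and the cleanest way to exploit it is through $N$-shattering: if a class essentially contains $\cl$, then (at least on realizable distributions supported on finite sets) it must be able to realize every labeling that $\cl$ realizes. Concretely, I would first invoke Theorem~\ref{thm:sample_linear} to obtain a set $S\subseteq\reals^d$ of size $\tilde\Omega(dk)$ (more precisely $d(k-1)$) that is $N$-shattered by $\cl$. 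The goal is then to transfer this shattering through $\iota$ into the class $\cw^{d'}_T$, thereby forcing $d_N(\cw^{d'}_T)$ to be large.

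The crucial bridge is to argue that essential containment forces $\cw^{d'}_T\circ\iota$ to $N$-shatter $S$ as well. This follows the same logic used in the proof of Theorem~\ref{thm:AP_contain_Linear}: for each subset $T'\subseteq S$ and the two witnessing functions $f_1,f_2$ for the $N$-shattering by $\cl$, there is a hypothesis $g_{T'}\in\cl$ agreeing with $f_1$ on $T'$ and $f_2$ on $S\setminus T'$. Define the realizable distribution $\cd_{T'}$ uniform over $\{(x,g_{T'}(x)):x\in S\}$; then $\Err^*_{\cd_{T'}}(\cl)=0$, so by essential containment $\Err^*_{\cd_{T'}}(\cw^{d'}_T\circ\iota)=0$ as well, meaning some hypothesis in $\cw^{d'}_T\circ\iota$ matches $g_{T'}$ on all of $S$. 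Ranging over all $T'$ shows that the same $f_1,f_2$ witness $N$-shattering of $S$ by $\cw^{d'}_T\circ\iota$, hence $d_N(\cw^{d'}_T\circ\iota)\ge |S| = d(k-1)$. Since composition with $\iota$ only restricts the domain, $d_N(\cw^{d'}_T)\ge d_N(\cw^{d'}_T\circ\iota)\ge d(k-1)$.

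To finish, I would contrast this lower bound against the upper bound on the sample complexity of trees. Theorem~\ref{thm:sample_FT} (or Lemma~\ref{lemma:meta_lemma} applied with $\VC(\cw^{d'})=d'+1$ and a tree of $k-1$ internal nodes) gives $d_N(\cw^{d'}_T)\le d_G(\cw^{d'}_T)\le O(d' k\log(d' k))$. Combining, we obtain $d(k-1)\le O(d' k\log(d' k))$, which after dividing by $k$ and absorbing the logarithmic factor yields $d'\ge\tilde\Omega(d)$. To recover the claimed $\tilde\Omega(dk)$ one must be more careful: the point is that a single tree for $k$ classes has only $k-1$ nodes, so the relevant quantity scaling with $d'$ is $d' \cdot (k-1)$, giving $d(k-1)\le O(d'(k-1)\log(\cdots))$ — this alone only yields $d'\ge\tilde\Omega(d)$, so the extra factor of $k$ must come from a sharper shattering argument. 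I would therefore aim to produce a shattered set whose image cannot be re-expressed without inflating the \emph{per-node} VC dimension; equivalently, I would argue that an embedding preserving all of $\cl$'s labelings simultaneously must let the tree's nodes collectively shatter something of size $\tilde\Omega(dk)$ at a single node, forcing $d'=\tilde\Omega(dk)$ there.

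\medskip

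\emph{Main obstacle.} The hard part is exactly the last paragraph: closing the gap between the easy $\tilde\Omega(d)$ bound and the claimed $\tilde\Omega(dk)$. The naive counting treats the tree's $k-1$ internal nodes as independent halfspaces in $\reals^{d'}$, which distributes the shattering budget across nodes and loses the factor $k$. To get the stronger bound I expect one must exploit that a \emph{fixed} tree structure $T$ has rigid combinatorial constraints — the labels reachable at each leaf are determined by a single root-to-leaf path, so the effective number of ``independent'' binary decisions distinguishing a pair of labels is far smaller than the naive count. The likely route is to show that matching all labelings of $\cl$ on a set of size $\tilde\Omega(dk)$ cannot be spread thinly across distinct shallow nodes and must concentrate at some node, whose local halfspace class then needs VC dimension $\tilde\Omega(dk)$, giving $d'\ge\tilde\Omega(dk)$. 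Formalizing this concentration-at-a-node step, rather than the routine shattering-transfer and sample-complexity bookkeeping, is where the real work lies.
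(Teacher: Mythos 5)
Your shattering-transfer step is sound (it mirrors the argument in the proof of Theorem \ref{thm:AP_contain_Linear}) and indeed yields $d_N(\cw^{d'}_T\circ\iota)\ge d(k-1)$; but, as you yourself observe, comparing this against $d_G(\cw^{d'}_T)\le O(d'k\log(d'k))$ only gives $d'\ge\tilde\Omega(d)$, and the sketch you offer for recovering the missing factor of $k$ --- a ``concentration-at-a-node'' argument --- is not formalized and is unlikely to be formalizable in the form you state it. Each internal node of $T$ carries a single halfspace over $\reals^{d'}$, and there is no mechanism forcing the multiclass $N$-shattering of a set to ``concentrate'' at one node: your whole-class counting already accounts correctly for the collective capacity of the $k-1$ nodes, and that count genuinely loses the factor $k$. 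So the proposal is incomplete at exactly the step that carries the content of the theorem, and you correctly identified where the real work lies but did not supply it.

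The paper closes this gap not through multiclass dimensions at all, but through an \emph{asymmetric binary reduction}. Compose both classes with the indicator $\phi:[k]\to\{\pm 1\}$ of the label attached to a \emph{shallowest} leaf of $T$, whose depth is $l\le\log_2(k)$. Then $\phi\circ\cl$ is the class of convex polyhedra in $\reals^d$ with $k-1$ faces, so $\VC(\phi\circ\cl)\ge d(k-1)$; whereas $\phi\circ\cw^{d'}_T$ is the class of polyhedra in $\reals^{d'}$ with at most $l\le \log_2(k)$ faces, so Lemma \ref{lemma:meta_lemma} gives $\VC(\phi\circ\cw^{d'}_T\circ\iota)\le O\left(d'\log(k)\log(d'\log(k))\right)$. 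Essential containment transfers binary shattering exactly by the realizable-distribution argument of your second paragraph (applied after composing with $\phi$), forcing $d(k-1)\le O\left(d'\log(k)\log(d'\log(k))\right)$, i.e. $d'=\tilde\Omega(dk)$. The structural fact you were reaching for is thus exploited globally, not locally: membership in one fixed class is decided by only $\log_2(k)$ of the tree's binary tests along a single root-to-leaf path, while for MSVM a single class is carved out by $k-1$ linear constraints, and it is this $\log(k)$ versus $k$ mismatch in the induced \emph{binary} problems that produces the extra factor of $k$.
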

\begin{proof}
Assume that $i\in [k]$ is the class corresponding to the leaf with the least depth, $l$. Note that $l\le \log_2(k)$. Let $\phi:[k]\to\{\pm 1\}$ be the function that is $1$ on $\{i\}$ and $-1$ otherwise. It is not hard to see that $\phi \circ \cl$ is the hypothesis class of convex polyhedra in $\reals^d$ having $k-1$ faces. Thus,
\begin{equation}\label{eq:4}
\VC(\phi \circ \cl)\ge (k-1)d,
\end{equation}
\citep[see e.g.][]{Tak09}. On the other hand, $\phi\circ\cw^{d'}_T$, is the class of convex polyhedra in $\reals^{d'}$ having $l\le\log_2(k)$ faces. Thus, by Lemma \ref{lemma:meta_lemma}
\begin{equation}\label{eq:5}
 \VC(\phi \circ \cw^{d'}_T\circ \iota) \le \VC(\phi \circ \cw^{d'}_T)\le O(ld'\log(ld'))\le O(\log(k)d'\log(\log(k)d'))
\end{equation}
By the assumption that $\cw^{d'}_T\circ \iota$
essentially contains $\cl$, $\VC(\phi\circ\cl)\le\VC(\phi\circ\cw^{d'}_T\circ \iota)$. Combining with equations (\ref{eq:4}) and (\ref{eq:5}) it follows that $d(k-1)= O(\log(k)d'\log(\log(k)d'))$. Thus, $d'=\tilde\Omega\left(d k\right)$.
\end{proof}

To prove Lemma \ref{lemma:split_classes}, we first state the classic VC-dimension theorem, which will be useful to us.
\begin{theorem} [\cite{Vapnik98}]\label{thm:UC_VC}
There exists a constant $C>0$ such that for every hypothesis class $\ch\subseteq \{\pm 1\}^\cx$ of VC dimension $d$, a distribution $\cd$ over $\cx$, $\epsilon,\delta>0$ and $m\ge C\frac{d+\ln(\frac{1}{\delta})}{\epsilon^2}$ we have
\[
\Pr_{S\sim \cd^m}\left[\operatorname{Err}^*_\cd(\ch)\ge \inf_{h\in \ch}\operatorname{Err}_{S}(h)-\epsilon\right]\ge 1-\delta.
\]
\end{theorem}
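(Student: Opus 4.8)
The plan is to recognize this as the one-sided, infimum form of Vapnik's classical uniform convergence theorem, and to derive it accordingly. First I would unpack the stated event: $\Err^*_\cd(\ch)\ge \inf_{h\in\ch}\Err_S(h)-\epsilon$ is the same as $\inf_{h\in\ch}\Err_S(h)\le \Err^*_\cd(\ch)+\epsilon$, i.e.\ the minimal \emph{empirical} error does not exceed the minimal \emph{true} error by more than $\epsilon$. Equivalently (the form used downstream), a large empirical approximation error forces a large true approximation error. This is the ``easy'' side of uniform convergence: it only requires upper bounds of the shape $\Err_S(h)\le\Err_\cd(h)+\epsilon$.

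The quickest self-contained route proves the statement by a single fixed hypothesis. Since $\Err^*_\cd(\ch)=\inf_{h}\Err_\cd(h)$, pick $h_0\in\ch$ with $\Err_\cd(h_0)\le \Err^*_\cd(\ch)+\epsilon/2$; such an $h_0$ exists by definition of the infimum, and crucially it depends only on $\cd$, not on $S$. Then $\Err_S(h_0)$ is an average of $m$ i.i.d.\ Bernoulli variables with mean $\Err_\cd(h_0)$, so Hoeffding's inequality gives $\Pr_S[\Err_S(h_0)>\Err_\cd(h_0)+\epsilon/2]\le e^{-m\epsilon^2/2}\le\delta$ once $m\ge 2\ln(1/\delta)/\epsilon^2$. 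On the complementary event, $\inf_h\Err_S(h)\le \Err_S(h_0)\le\Err_\cd(h_0)+\epsilon/2\le \Err^*_\cd(\ch)+\epsilon$, which is exactly the asserted event. Notice this already proves the statement with sample size $O(\ln(1/\delta)/\epsilon^2)$ and \emph{without} the VC term $d$; the $d$ in the quoted bound is the cost of the full two-sided statement and is not needed for this one-sided infimum form.

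To recover the theorem verbatim (with the $d$ term) and, more importantly, the uniform-over-$\ch$ control that the subsequent application leans on, I would invoke the standard VC argument. The classical route is: (i) \emph{symmetrization}, replacing $\Err_\cd(h)$ by the empirical error on an independent ghost sample $S'$ of size $m$ and reducing $\sup_h(\Err_S(h)-\Err_\cd(h))$ to a deviation between $\Err_S$ and $\Err_{S'}$; (ii) bounding the number of distinct labelings $\ch$ induces on the $2m$ sample points by the growth function, which by Sauer's lemma is at most $(2m)^{d}$; and (iii) a union bound over these finitely many behaviours together with a Hoeffding/Chernoff tail, yielding $\Pr_S[\sup_h|\Err_S(h)-\Err_\cd(h)|>\epsilon]\le\delta$ for $m\ge C(d+\ln(1/\delta))/\epsilon^2$. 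On this event, $\Err_S(h)\le\Err_\cd(h)+\epsilon$ holds simultaneously for all $h$, so $\inf_h\Err_S(h)\le\inf_h(\Err_\cd(h)+\epsilon)=\Err^*_\cd(\ch)+\epsilon$, which is the claim.

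I expect the only genuinely delicate point to be in the general route: calibrating constants so that, after symmetrization, the growth-function union bound still yields the $1/\epsilon^2$ rate rather than $1/\epsilon$. This is precisely where the sample complexity $C(d+\ln(1/\delta))/\epsilon^2$ and the appearance of $d$ (through $\log$ of the growth function) originate; everything else is bookkeeping. Since the result is a verbatim classical theorem of \cite{Vapnik98}, in the paper it is most naturally simply cited, with the single-hypothesis Hoeffding derivation above serving as the self-contained shortcut that already suffices for the exact statement.
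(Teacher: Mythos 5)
Your proposal is correct, but it takes a genuinely different route from the paper, because the paper has no proof of this statement at all: Theorem~\ref{thm:UC_VC} is quoted verbatim as a classical result of \cite{Vapnik98} and used as a black box in the proof of Lemma~\ref{lemma:split_classes}. Your key observation is exactly right and worth making explicit: the event as literally stated, $\Err^*_\cd(\ch)\ge\inf_{h\in\ch}\Err_S(h)-\epsilon$, is one-sided and is witnessed by a single $S$-independent hypothesis $h_0$ with $\Err_\cd(h_0)\le\Err^*_\cd(\ch)+\epsilon/2$, so Hoeffding alone gives failure probability $e^{-m\epsilon^2/2}\le\delta$ for every $m\ge 2\ln(1/\delta)/\epsilon^2$, and a fortiori for the stated threshold $C(d+\ln(1/\delta))/\epsilon^2$ (monotonicity of the tail in $m$ is what lets the verbatim statement go through). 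The $d$ term is thus vacuous for this direction; it is inherited from the two-sided uniform statement in \cite{Vapnik98}. Moreover, the paper's downstream use needs only your elementary version: in Lemma~\ref{lemma:split_classes} the theorem is invoked once per \emph{fixed} $\phi$ with $\delta=2^{-(k+2)}$ and then union-bounded over the $2^k$ maps $\phi$; uniformity over $\ch$ enters only later, via Sauer's lemma applied to $\ch|_{\{x_1,\ldots,x_m\}}$, which your argument leaves untouched. Two small calibration quibbles on your second (symmetrization) route: the naive growth-function union bound yields $m=O\left((d\log(1/\epsilon)+\log(1/\delta))/\epsilon^2\right)$, i.e.\ an extra logarithmic factor whose removal requires chaining-type arguments, and the delicate point is precisely this $\log(1/\epsilon)$ factor rather than a $1/\epsilon$ versus $1/\epsilon^2$ dichotomy (the $1/\epsilon$ rate belongs to the realizable case). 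Neither quibble matters here, since that route is needed only for the two-sided statement, which the theorem as written does not assert; note also that, strictly, $\cd$ should be a distribution over $\cx\times\{\pm 1\}$ for $\Err$ to be defined, a slip in the paper's statement that your reading silently corrects.
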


We also use the following lemma, which proves a variant of Hoeffding's inequality.
\begin{lemma}\label{lem:chern_substitute}
Let $\beta_1,\ldots,\beta_k\ge 0$ and let $\gamma_1,\ldots,\gamma_k\in\reals$, such that $\forall i,\;|\gamma_i|\le\beta_i$. Fix an integer $j \in \{1,\ldots,\lfloor\frac{k}{2}\rfloor\}$ and let $\mu=j/k$.
Let $(X_1,\ldots,X_k)\in \{\pm 1\}^k$ be a random vector sampled uniformly from the set $\{(x_1,\ldots,x_k):\sum_{i=1}^k\frac{x_1+1}{2}=\mu k\}$. Define $Y_i=\beta_i+X_i\gamma_i$ and denote $\alpha_i=\beta_i+|\gamma_i|$. Assume that $\sum_{i=1}^k\alpha_i=1$. Then
\[
\Pr\left[\sum_{i=1}^kY_i\le \mu-\epsilon \right]\le 2\exp\left(-\frac{\epsilon^2}{2\sum_{i=1}^k\alpha_i^2}\right).
\]
\end{lemma}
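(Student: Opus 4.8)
The plan is to reduce the statement to a lower-tail concentration bound for $\sum_i Y_i$ around its mean, under the fixed-composition (sampling-without-replacement) law of $(X_1,\dots,X_k)$. First I would record the elementary consequences of the hypotheses: since $|\gamma_i|\le\beta_i$ we have $\alpha_i=\beta_i+|\gamma_i|\ge 2|\gamma_i|$, hence $\sum_i|\gamma_i|\le\tfrac12\sum_i\alpha_i=\tfrac12$, and moreover each $Y_i$ takes only the two values $\alpha_i$ and $\alpha_i-2|\gamma_i|\ge 0$, so its range is $2|\gamma_i|\le\alpha_i$. Next I would compute the mean. Under the given law $\mathbb{E}[X_i]=2\mu-1\le 0$, so, using $\sum_i\beta_i=1-\sum_i|\gamma_i|$,
\[
\mathbb{E}\Big[\textstyle\sum_i Y_i\Big]=\sum_i\beta_i+(2\mu-1)\sum_i\gamma_i=1-\sum_i|\gamma_i|-(1-2\mu)\sum_i\gamma_i.
\]
Since $1-2\mu\ge 0$ and $\sum_i\gamma_i\le\sum_i|\gamma_i|$, this is at least $1-2(1-\mu)\sum_i|\gamma_i|\ge 1-(1-\mu)=\mu$. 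Consequently $\{\sum_i Y_i\le\mu-\epsilon\}\subseteq\{\sum_i Y_i\le \mathbb{E}[\sum_i Y_i]-\epsilon\}$, and it suffices to bound the probability that $\sum_i Y_i$ falls at least $\epsilon$ below its own mean.

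For the concentration step I would write $\sum_i Y_i-\mathbb{E}[\sum_i Y_i]=\sum_i\gamma_i(X_i-\mathbb{E}X_i)$ and bound its lower-tail moment generating function, viewing the quantity as a linear statistic of the uniformly random $\mu k$-subset $A=\{i:X_i=1\}$. The aim is to establish a sub-Gaussian estimate of the form $\mathbb{E}\big[\exp(-\lambda(\sum_i Y_i-\mathbb{E}[\sum_i Y_i]))\big]\le\exp\big(c\lambda^2\sum_i\alpha_i^2\big)$ for $\lambda>0$, using that the per-coordinate range $2|\gamma_i|$ is dominated by $\alpha_i$; optimizing over $\lambda$ then yields the claimed tail $2\exp\!\big(-\epsilon^2/(2\sum_i\alpha_i^2)\big)$, with the leading factor $2$ absorbing the passage from the two-sided to the one-sided bound and the constant losses. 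The natural device is either a Doob martingale that reveals $X_1,\dots,X_k$ one coordinate at a time, applying Hoeffding's lemma to each bounded, mean-zero increment, or an off-the-shelf Hoeffding-type inequality for linear statistics of the fixed-composition distribution, followed by the relaxation $|\gamma_i|\le\tfrac12\alpha_i$.

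The main obstacle is exactly this concentration step, because sampling without replacement here behaves \emph{worse} than the independent analogue rather than better. When the $\gamma_i$ have mixed signs the selections of $+1$'s across the relevant events become positively correlated, and the naive per-coordinate Hoeffding bound with variance proxy $\sum_i\gamma_i^2$ is in fact false: already the two-point instance $k=2$, $j=1$, $\gamma_1=-\gamma_2$ inflates the variance of $\sum_i\gamma_i X_i$ beyond $\sum_i\gamma_i^2$ (it equals $(\gamma_1-\gamma_2)^2=2\sum_i\gamma_i^2$). The resolution is precisely the slack encoded in the hypotheses: $|\gamma_i|\le\beta_i$ forces $\alpha_i\ge 2|\gamma_i|$, so the proxy $\sum_i\alpha_i^2\ge 4\sum_i\gamma_i^2$ dominates the true fluctuation even after the without-replacement inflation, while $\sum_i\alpha_i=1$ is what pins the mean at or above $\mu$. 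The delicate part is therefore to carry out the martingale-increment (or comparison) bound so that the accumulated sub-Gaussian parameter is genuinely controlled by $\sum_i\alpha_i^2$, rather than by a quantity that the mixed-sign correlations could push above it; once the increments are bounded in terms of the $\alpha_i$, the remainder is a routine optimization.
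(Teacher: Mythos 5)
Your reduction to a mean-deviation bound is correct (the computation showing $\mathbb{E}[\sum_i Y_i]\ge\mu$ via $\sum_i|\gamma_i|\le\frac12$ checks out), and your warning that the naive variance proxy $\sum_i\gamma_i^2$ is false under the fixed-composition law with mixed signs is accurate and well illustrated. But the proof has a genuine gap exactly where you place the word ``delicate'': you never establish the sub-Gaussian bound with parameter controlled by $\sum_i\alpha_i^2$, and neither of the two devices you gesture at delivers it as stated. The off-the-shelf Hoeffding--Serfling inequalities for sampling without replacement are range-based, of the form $\exp\bigl(-c\epsilon^2/(j(\max_i\gamma_i-\min_i\gamma_i)^2)\bigr)$, which for heterogeneous weights (one large $\gamma_i$, many tiny ones) is arbitrarily weaker than the required $\exp\bigl(-\epsilon^2/(2\sum_i\alpha_i^2)\bigr)$. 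And the Doob martingale revealing $X_1,\ldots,X_k$ in an arbitrary order has conditional increment range exactly $2\bigl|\gamma_t-\frac{G_t}{k-t}\bigr|$ where $G_t=\sum_{i>t}\gamma_i$; bounding this crudely by $2|\gamma_t|+\frac{2}{k-t}\sum_{i>t}|\gamma_i|$ and controlling the tail-average terms by a Hardy-type inequality accumulates a parameter on the order of $40\sum_i\gamma_i^2$, far above the available budget $\sum_i\alpha_i^2$, which in the worst case $\beta_i=|\gamma_i|$ equals exactly $4\sum_i\gamma_i^2$. So as written, the decisive concentration step is an unproven assertion rather than a proof.

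The gap is fixable within your framework by one idea you do not mention: since the fixed-composition law is exchangeable, reveal the coordinates in \emph{decreasing order of $|\gamma_i|$}. Then $|G_t|/(k-t)\le\max_{i>t}|\gamma_i|\le|\gamma_t|$, so each conditional range is at most $4|\gamma_t|\le 2\alpha_t$, and Azuma--Hoeffding in range form gives $\exp\bigl(-2\epsilon^2/\sum_t(2\alpha_t)^2\bigr)=\exp\bigl(-\epsilon^2/(2\sum_i\alpha_i^2)\bigr)$, which even dispenses with the leading factor $2$. It is worth knowing that the paper's proof takes an entirely different route that sidesteps mixed signs rather than absorbing them: it first reduces to the case $\gamma_i\ge 0$ (the harder case since $\mu\le\frac12$), then compares the fixed-composition vector to an i.i.d.\ vector $Z_1,\ldots,Z_k$ with $\Pr[Z_i=1]=\mu-\frac{\epsilon}{2}$ via a monotone coupling conditioned on $\bar Z\le\mu k$ --- an event of probability at least $\frac12$, which is where the factor $2$ comes from --- and finishes with ordinary Hoeffding for independent variables using the ranges $2\gamma_i\le\alpha_i$. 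Note that this monotone coupling (``more chosen indices only increases the sum'') is precisely what is unavailable when the $\gamma_i$ have mixed signs, which is why the paper's sign reduction must come first; your approach, once repaired as above, genuinely avoids that reduction.
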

\begin{proof}
First, since $\mu<\frac{1}{2}$, it suffices to prove the claim for the case $\forall i,\gamma_i\ge 0$ since this is the ``harder'' case. Let $Z_1,\ldots,Z_k\in\{\pm 1\}$ be independent random variables such that $\Pr[Z_i=1]=\mu-\frac{\epsilon}{2}$. Denote $W_i=\beta_i+Z_i\gamma_i$. Further denote $\bar W=\sum_{i=1}^kW_i$ and $\bar Z=\sum_{i=1}^k\frac{Z_i+1}{2}$.

Note that for every $j_0\le j=\mu k$, given that $\bar Z=j_0$, $\bar W$ can be described as follows: We start with the value $\sum_{i=1}^k\beta_i-\gamma_i$ and then choose $j_0$ indices uniformly from $[k]$. For each chosen index $i$, the value of $\bar W$ is increased by $2\gamma_i$. $\sum_{i=1}^kY_i$ can be described in the same way, except that that $j\ge j_0$ indices are chosen. Thus, $\Pr\left[\sum_{i=1}^kY_i\le \mu -\epsilon\right]\le \Pr\left[\bar W\le \mu -\epsilon \mid \bar Z= j_0\right]$. Thus, we have
\begin{align*}
\Pr\left[\sum_{i=1}^kY_i\le \mu -\epsilon\right] &\le
\Pr\left[\bar W\le \mu -\epsilon \mid \bar Z\le \mu k\right]
\\
&\le \Pr\left[\bar W\le \mu -\epsilon\right]/\Pr\left[\bar Z\le \mu k \right]
\\
&\le 2\Pr\left[\bar W\le \mu -\epsilon\right]
\\
&\le 2\exp\left(-\frac{\epsilon^2}{2\sum_{i=1}^k\alpha_i^2}\right).
\end{align*}
The last inequality follows from Hoeffding's inequality and noting that
\[
E[W_i]=\beta_i+(2(\mu-\frac{\epsilon}{2})-1)\gamma_i
=(\mu-\frac{\epsilon}{2})(\beta_i+\gamma_i)+(1-\mu+\frac{\epsilon}{2})(\beta_i-\gamma_i)
\ge (\mu-\frac{\epsilon}{2})\alpha_i.
\]
So that $\sum_{i=1}^kE[W_i]\ge (\mu-\frac{\epsilon}{2})\sum_{i=1}^k\alpha_i=\mu-\frac{\epsilon}{2}$.
\end{proof}

\begin{proof}[Proof of Lemma \ref{lemma:split_classes}]

The idea of this proof is as follows:
Using a uniform convergence argument based on the VC dimension of the binary hypothesis class, we show that there exists a labeled sample $S$
such that $|S|\approx \frac{d+k}{\nu^2}$, and for all possible mappings
$\phi$, the approximation error of the hypothesis class on the sample is close to the approximation error on the distribution $\cd_\phi$. This allows us to restrict our attention to
a finite set of hypotheses, based on their restriction to the sample. For these hypotheses,
we show that with high probability over the choice of $\phi$, the approximation error on the sample is high.
Using a union bound on the possible hypotheses, we conclude that the approximation error on the distribution
will be high, with high probability over the choice of $\phi$.

For $i\in [k]$, denote $p_i=\Pr_{x\sim\cd}[f(x)=i]$. Let $S=\{(x_1,y_1),\ldots,(x_m,y_m)\}\subseteq\cx\times [k]$ be an
i.i.d. sample drawn according to $\cd$ where $m=\lceil C\frac{d+(k+2)\ln(2)}{(\nu/2)^2}\rceil$, for the constant from $C$ from Theorem \ref{thm:UC_VC}.
Given $S$, denote $S_\phi\{(x_1,\phi(y_1)),\ldots,(x_m,\phi(y_m))\}\subseteq\cx\times\{\pm 1\}$. For $i\in [k]$, let $\hat p_i=\frac{|\{j:y_j=i\}|}{m}$.

For any fixed $\phi:[k]\to\{\pm1\}$, with probability $>1-2^{-(k+2)}$ over the choice of $S$ we have, by Theorem \ref{thm:UC_VC}, that
$\operatorname{Err}^*_{\cd_\phi}(\ch)>\inf_{h\in\ch}\operatorname{Err}_{S_\phi}( h)-\nu$. Since $|\{\pm 1\}^{[k]}|=2^k$, w.p. $>1-\frac{1}{4}$,

\begin{equation}\label{eq:1}
\forall \phi\in\{\pm 1\}^{[k]},\;\; \operatorname{Err}^*_{\cd_\phi}(\ch)>\inf_{h\in\ch}\operatorname{Err}_{S_\phi}( h)-\frac{\nu}{2}.
\end{equation}
Moreover, we have
\[
E[\sum_{i=1}^k\hat{p}_i^2]=\frac{1}{m^2}\sum_{i=1}^k\left(\binom m2p_i^2+mp_i\right)\le k\cdot\left(\frac{m(m-1)}{2m^2}\frac{100}{k^2}+\frac{10}{mk}\right)\le \frac{60}{k}.
\]
Thus, by Markov's inequality, w.p. $\ge\frac{1}{2}$ we have
\begin{equation}\label{eq:2}
\sum_{i=1}^k\hat{p}_i^2<\frac{120}{k}.
\end{equation}
Thus, with probability at least $1-\frac{1}{4}-\frac{1}{2}>0$, both (\ref{eq:2})  and (\ref{eq:1}) holds. In particular, there exists a sample $S$ for which both (\ref{eq:2})  and (\ref{eq:1}) hold. Let us fix such an $S=\{(x_1,y_1),\ldots,(x_m,y_m)\}$.

Assume now that $\phi\in \{\pm 1\}^{[k]}$ is sampled according to the first condition. Denote
\[
Y_i=|\{j:h(x_j)\ne \phi(y_j)\text{ and } y_j=i\}|/m.
\]
For a fixed $h\in\ch$ we have
\[
\Pr_{\phi}\left[\operatorname{Err}_{S_\phi}( h)<\mu-\frac{\nu}{2}\right]=\Pr_{\phi}\left[\sum_{i=1}^kY_i<\mu-\frac{\nu}{2}\right]
\]
We note that $Y_i$ are independent random variables with $E[Y_i]\ge \mu\hat{p}_i$ and $0\le Y_i\le \hat{p}_i$. Thus, by Hoeffding's inequality,
\[
\Pr_{\phi}\left[\operatorname{Err}_{S_\phi}( h)<\mu-\frac{\nu}{2}\right]\le \exp\left(-\frac{\nu^2}{2\sum_{i=1}^k\hat{p}_i^2}\right)\le \exp\left(-\frac{\nu^2k}{240}\right).
\]
By Sauer's lemma, $|\ch|_{\{x_1,\ldots,x_m\}}|\le \left(\frac{em}{d}\right)^d$. Thus, with probability $\ge 1-\left(\frac{em}{d}\right)^d\exp\left(-\frac{\nu^2k}{240}\right)$ over the choice of $\phi$, $\inf_{h\in\ch}\operatorname{Err}_{S_\phi}( h)\ge \mu-\frac{\nu}{2}$ and by (\ref{eq:1}) also
\begin{equation}\label{eq:3}
\operatorname{Err}^*_{\cd_\phi}( \ch)\ge \frac{1}{2}-\nu.
\end{equation}
Finally, since $m=O\left(\frac{k+d}{\nu^2}\right)$, if $k=\Omega\left(\frac{d\ln(1/\nu)+\ln(1/\delta)}{\nu^2}\right)$ then Eq. (\ref{eq:3}) holds w.p $>1-\delta$, concluding the proof for the case when the first condition holds.
If the second condition holds, the proof is very similar, with the sole difference that Lemma \ref{lem:chern_substitute} is used instead of Hoeffding's inequality.
\end{proof}

\begin{proof}[Proof of Corollary \ref{cor:FT_approx_poorly}]
The Corollary follows from Lemma \ref{lemma:split_classes}, by noting that $\Err^*_\cd(\ch_T)\ge\Err^*_{\cd_\phi}(\ch)$, where $\phi:[k]\to\{\pm 1\}$ is defined as $\phi(i)=1$ if and only if $\lambda^{-1}(i)$ is in the right subtree emanating from the root of $T$.
\end{proof}

\begin{proof}[Proof of Corollary \ref{cor:ECOC_approx_poorly}]
Let $\phi:[k]\to\{\pm 1\}$ be the function that is $-1$ on $\left[\lfloor\frac{k}{2}\rfloor\right]$ and $1$ otherwise. By Lemma \ref{lemma:meta_lemma}, applied to $L(\ch) = \phi\circ\ch_{(M,\mathrm{Id})}$, $\VC(\phi\circ\ch_{(M,\mathrm{Id})})=O(dl\log(dl))$, so that, by Lemma \ref{lemma:split_classes} (applied to a random choice of $\lambda$ instead of $\phi$),
$\Err_{\cd_{\phi \circ \lambda}}^*(\phi\circ\ch_{(M,\mathrm{Id})})\ge \frac{1}{2}-\nu$ with probability $>1-\delta$ over the choice of $\lambda$. The proof follows as we note that for every $\lambda$, $\Err_{\cd}^*(\ch_{(M,\lambda^{-1})}) = \Err_{\cd_\lambda}^*(\ch_{(M,Id)}) \ge\Err_{\cd_{\phi \circ \lambda}}^*(\phi\circ\ch_{(M,\mathrm{Id})})$.
\end{proof}



\end{document}